\documentclass[journal]{IEEEtran}

\usepackage[noadjust]{cite}


\usepackage[utf8]{inputenc} 
\usepackage[T1]{fontenc}    
\usepackage{amsmath,amssymb,amsfonts,amsthm}
\usepackage{mathtools}
\usepackage{graphicx}
\usepackage{multirow}
\usepackage{textcomp}
\usepackage{xcolor}
\usepackage{tcolorbox}
\usepackage{tikz}
\tcbuselibrary{skins}
\usetikzlibrary{calc}
\usetikzlibrary{shapes.multipart}
\usepackage{mathrsfs} 
\usepackage[hidelinks]{hyperref} 
\usepackage{wrapfig} 
\usepackage{booktabs}
\usepackage{algorithm}
\usepackage[noend]{algpseudocode}

\makeatletter
\renewcommand{\ALG@beginalgorithmic}{\small}
\makeatother

\ifCLASSOPTIONcompsoc
\usepackage[caption=false,font=normalsize,labelfon
t=sf,textfont=sf]{subfig}
\else
\usepackage[caption=false,font=footnotesize]{subfig}
\fi

\usepackage{etoolbox}
\makeatletter
\patchcmd{\@makecaption}
  {\scshape}
  {}
  {}
  {}
\patchcmd{\@makecaption}
  {\\}
  {.\ }
  {}
  {}
\patchcmd{\@makecaption}
  {\centering}
  {}
  {}
  {}
\makeatother

\def\BibTeX{{\rm B\kern-.05em{\sc i\kern-.025em b}\kern-.08em
    T\kern-.1667em\lower.7ex\hbox{E}\kern-.125emX}}
    
\def\<#1>{\boldsymbol{\mathbf{#1}}} 
\providecommand{\norm}[1]{\lVert#1\rVert}
\providecommand{\abs}[1]{\lvert#1\rvert}
\providecommand{\relu}{\mathrm{relu}}

\newtheorem{dfn}{Definition}
\newtheorem{prp}{Proposition}
\newtheorem{thm}{Theorem}
\newtheorem{lem}{Lemma}

\author{Trevor~Avant~and~Kristi~A.~Morgansen
\thanks{Both authors are with the Department
of Aeronautics \& Astronautics, University of Washington, Seattle,
WA, 98115 USA e-mail: trevoravant@gmail.com, morgansn@uw.edu.}
\thanks{This work was supported by ONR grant N00014-17-1-2623.}
\thanks{Code is available at \url{https://github.com/uwaa-ndcl/local_lipschitz}}}

\title{Analytical bounds on the local Lipschitz constants of ReLU networks}


\begin{document}
\bstctlcite{myctl}

\maketitle

\begin{abstract}
In this paper, we determine analytical upper bounds on the local Lipschitz constants of feedforward neural networks with ReLU activation functions.
We do so by deriving Lipschitz constants and bounds for ReLU, affine-ReLU, and max pooling functions, and combining the results to determine a network-wide bound.
Our method uses several insights to obtain tight bounds, such as keeping track of the zero elements of each layer, and analyzing the composition of affine and ReLU functions.
Furthermore, we employ a careful computational approach which allows us to apply our method to large networks such as AlexNet and VGG-16.
We present several examples using different networks, which show how our local Lipschitz bounds are tighter than the global Lipschitz bounds.
We also show how our method can be applied to provide adversarial bounds for classification networks.
These results show that our method produces the largest known bounds on minimum adversarial perturbations for large networks such as AlexNet and VGG-16.
\end{abstract}

\section{Introduction} \label{sec:intro}




Although neural networks have proven to be very adept at handling image processing tasks, they are also often very sensitive. For many networks, a small perturbation of the input can produce a huge change in the output \cite{Szegedy}.
Due to neural networks' high-dimensionality and complex constitutive functions, sensitivity is difficult to analyze, and as a result, is still not theoretically well-understood.
Nevertheless, neural networks are currently being applied to a wide range of tasks, including safety-critical applications such as autonomous driving. In order to safely incorporate neural networks into the physical world, it is necessary to develop a better theoretical understanding of their sensitivity.

The high sensitivity of deep neural networks has been noted as early as \cite{Szegedy}. This work also conceived the idea of adversarial examples, which are small perturbations to an input that cause a network to misclassify (and have since become a popular area of research in their own right \cite{Goodfellow2014}).
Sensitivity can be characterized in a variety of ways, one of which being the input-output Jacobian \cite{Novak, Sokolic}. Although the Jacobian gives a local estimate of sensitivity, it generally cannot be used to provide any meaningful guarantees.

Another characterization of sensitivity is the Lipschitz constant, which describes how much the output of a function can change with respect to changes in the input. Lipschitz constants can be computed as a global measure which applies to any input, or as a local measure which applies only to a specific set of possible inputs. Furthermore, the Lipschitz constant can take several forms depending on which norm is used to define it (e.g., the 1-, 2-, or $\infty$-norm).

Regardless of whether the measure is global or local, or which norm is used to define it, analytically computing the exact Lipschitz constant is challenging due to the complexity and high-dimensionality of neural networks. Although this task was approached in \cite{Jordan} using mixed integer programming, the resulting method can only be applied to very small networks, and only works with respect to the 1- and $\infty$-norms.

As exact computation of the Lipschitz constant is formidable, the next best option is to determine an upper bound. A standard upper bound on the global Lipschitz constant was described in \cite{Szegedy}, and is computed by taking the product of the Lipschitz bounds of each function in a network. This bound is simple and can be computed for larger networks, but it has the downside of being very conservative. 

Recently, several studies have explored using optimization-based approaches to bound or approximate the Lipschitz constant of neural networks. The work of \cite{Scaman} presents two algorithms to bound the Lipschitz constant: AutoLip and SeqLip. AutoLip reduces to the global Lipschitz bound, while SeqLip is an algorithm which requires a greedy approximation for larger networks. The work of \cite{Latorre} presents a sparse polynomial optimization method (LiPopt) to compute bounds on Lipschitz constants, but relies on the network being sparse which often requires the network to be pruned. A semidefinite programming technique (LipSDP) is presented in \cite{Fazlyab} to compute Lipschitz bounds, but in order to apply it to larger networks, a relaxation must be used which invalidates the guarantee. Another approach is that of \cite{Zou}, in which linear programming is used to estimate Lipschitz constants. The downside to all of these approaches is that they usually can only be applied to small networks, and also often have to be relaxed, which invalidates any guarantee on the bound. Also of note are several other works that have considered constraining Lipschitz constants as a means to regularize a network \cite{Gouk,Terjek,Bartlett}.

Network sensitivity is also often analyzed in regards to the robustness of classification networks against adversarial examples \cite{Tjeng, Peck, Tsuzuku, Weng2}. This area of research is often closely related to Lipschitz analysis, but since the focus is on the specific task of adversarial examples for classification networks, it is often unclear how or if these techniques can be adapted to provide Lipschitz bounds. However, we do note that the method we present in this paper involves a few key insights that have also been utilized in \cite{Weng2, Tjeng}, specifically, bounding the Lipschitz constants of the ReLU, and determining that certain output elements of a layer are always zero.


In summary, few techniques are available which can provide guaranteed Lipschitz bounds, and the ones that do only work for small networks. In this paper, we present a method which provides guaranteed local Lipschitz bounds which can be computed for large networks. Furthermore, as Lipschitz constants are directly related to adversarial bounds, we also show how our method can produce guaranteed bounds on the minimum magnitude of adversarial examples. Our method produces a bound that is orders of magnitude tighter than the bound derived from the global Lipschitz constant (the only other bound which can be applied to large networks), so our method represents a significant improvement in certifying adversarial bounds.

Our analysis focuses on two types of functions, affine-ReLU and max pooling, which serve as the building blocks of many networks such as AlexNet \cite{Krizhevsky} and the VGG networks \cite{Simonyan}.


The remainder of this paper is organized as follows. In Section \ref{sec:norms} we make a short note about which norm we consider in the paper. In Section \ref{sec:lipschitz} we discuss global and local Lipschitz constants. In Sections \ref{sec:relu_lipschitz}, \ref{sec:affine_relu_lipschitz}, and \ref{sec:max_pooling_lipschitz} we derive Lipschitz constants or bounds for ReLU, affine-ReLU, and max pooling functions, respectively. In Section \ref{sec:network_wide_bounds} we describe how to combine our bounds to calculate the local Lipschitz constant of an entire feedforward neural network. In Section \ref{sec:computational_techniques} we discuss computational techniques which make it possible to apply our results to large networks. In Section \ref{sec:simulations} we apply our method to various networks, and show how our Lipschitz bounds can be used to determine bounds on adversarial examples for classification networks. The paper concludes in Section \ref{sec:conclusion} with a summary and possible next steps.





\section{Note about norms} \label{sec:norms}

Note that in this paper, we let $\norm{\cdot}$ denote the 2-norm. However, many of our results, such as those in Section \ref{sec:lipschitz}, hold for any norm. Extending our results to other norms is an area of future work.

\section{Lipschitz constants} \label{sec:lipschitz}

\subsection{Global Lipschitz constants}

In this paper we will analyze sensitivity using Lipschitz constants, which measure how much the output of a function can change with respect to changes in the input.
\begin{dfn}
The \textbf{global Lipschitz constant} of a function $\<f>: \mathbb{R}^n \rightarrow \mathbb{R}^m$ is the minimal $L \geq 0$ such that
\begin{equation}
\norm{\<f>(\<x>_2) - \<f>(\<x>_1)} \leq L \norm{\<x>_2 - \<x>_1}, ~~~~ \forall \<x>_1,\<x>_2 \in \mathbb{R}^n .
\label{eq:global_lipschitz_constant_original}
\end{equation}
\end{dfn}
Note that some authors define any $L$ that satisfies the inequality above as ``a Lipschitz constant'', but we will define ``the Lipschitz constant'' as the minimal $L$ for which this inequality holds, and we refer to any larger value as an upper bound. We can solve for $L$ in \eqref{eq:global_lipschitz_constant_original} as
\begin{equation}
L = \sup_{\<x>_1 \neq \<x>_2} \frac{\norm{\<f>(\<x>_2) - \<f>(\<x>_1)}}{\norm{\<x>_2 - \<x>_1}} .
\label{eq:global_lipschitz_constant}
\end{equation}
Note that excluding points such that $\<x>_1{=}\<x>_2$ does not affect the supremization above since these points satisfy \eqref{eq:global_lipschitz_constant_original} for any $L$.

\subsection{Local Lipschitz constants}

While the global Lipschitz constant is computed with respect to all possible inputs in $\mathbb{R}^n$, we can also compute a local Lipschitz constant with respect to only a specific set of inputs. In this paper, we will define the local Lipschitz constant with respect to a nominal input $\<x>_0 \in \mathbb{R}^n$ and set of all possible inputs $\mathcal{X} \subset \mathbb{R}^n$. Note that we have used the symbol ``$L$'' to denote the global Lipschitz constant, and will overload our notation and use the symbol ``$L(\<x>_0, \mathcal{X})$'' to denote the local Lipschitz constant.
\begin{dfn}
The \textbf{local Lipschitz constant} of a function $\<f>: \mathbb{R}^n \rightarrow \mathbb{R}^m$, with respect to nominal input $\<x>_0 \in \mathbb{R}^n$ and set of all possible inputs $\mathcal{X} \subset \mathbb{R}^n$, is the minimal $L(\<x>_0, \mathcal{X})$ such that
\begin{equation}
\norm{\<f>(\<x>) - \<f>(\<x>_0)} \leq L(\<x>_0, \mathcal{X}) \norm{\<x> - \<x>_0} , ~~~~ \forall \<x> \in \mathcal{X} .
\label{eq:local_lipschitz_constant_original}
\end{equation}
\end{dfn}
As we did in \eqref{eq:global_lipschitz_constant}, we can solve for $L(\<x>_0, \mathcal{X})$ in \eqref{eq:local_lipschitz_constant_original} which yields
\begin{align}
L( \<x>_0, \mathcal{X} ) &\coloneqq \sup_{\substack{\<x> \in \mathcal{X} \\ \<x> \neq \<x>_0}} \frac{\norm{\<f>(\<x>) - \<f>(\<x>_0)}}{\norm{\<x> - \<x>_0}} .
\label{eq:local_lipschitz_constant}
\end{align}
As with the global Lipschitz constant, excluding points such that $\<x>{=}\<x>_0$ does not affect the result since these points satisfy \eqref{eq:local_lipschitz_constant_original} for any $L(\<x>_0,\mathcal{X})$. In this paper we will often leave the ``$\<x>{\neq}\<x>_0$'' out of the subscript \eqref{eq:local_lipschitz_constant} to avoid clutter.

Additionally, we note that in the special case of $\mathcal{X} = \{ \<x>_0 \}$, then \eqref{eq:local_lipschitz_constant} cannot be used in place of \eqref{eq:local_lipschitz_constant_original}. In this case, we can determine that $L(\<x>_0, \mathcal{X}) = 0$ from \eqref{eq:local_lipschitz_constant_original}. So in \eqref{eq:local_lipschitz_constant} we are implicitly assuming that $\mathcal{X} \neq \{ \<x>_0 \}$.


\subsection{Properties of local Lipschitz constants}

There are three properties of local Lipschitz constants that will come in handy in our analysis.
The first is that the local Lipschitz constant taken with respect to set $\mathcal{X}$ is upper bounded by the local Lipschitz constant taken with respect to a superset $\mathcal{S}$ of $\mathcal{X}$. We will use this property in our analysis as we will often determine the local Lipschitz constant with respect to a bound around $\mathcal{X}$ rather than with respect to $\mathcal{X}$ itself.
\begin{prp} \label{prp:superset}
The local Lipschitz constant taken with respect to the set $\mathcal{X}$ is upper bounded by the local Lipschitz constant taken with respect to a superset $\mathcal{S}$ of $\mathcal{X}$:
\begin{equation}
L(\<x>_0, \mathcal{X}) \leq L(\<x>_0, \mathcal{S}) .
\label{eq:local_lipschitz_superset}
\end{equation}
\end{prp}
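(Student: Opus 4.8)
The plan is to read off the claim directly from the variational characterization \eqref{eq:local_lipschitz_constant} of the local Lipschitz constant, using the elementary fact that the supremum of a nonnegative function taken over a subset cannot exceed its supremum over a superset. So the proof is essentially a one-line monotonicity argument, with the only care needed being the degenerate case in which the smaller set is a single point.

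Concretely, first I would dispose of the case $\mathcal{X} = \{\<x>_0\}$: here \eqref{eq:local_lipschitz_constant_original} forces $L(\<x>_0,\mathcal{X}) = 0$, and since any local Lipschitz constant is by definition nonnegative, $0 \leq L(\<x>_0,\mathcal{S})$ holds trivially. In the remaining case $\mathcal{X} \neq \{\<x>_0\}$, I would invoke \eqref{eq:local_lipschitz_constant} for both sets. Since $\mathcal{X} \subseteq \mathcal{S}$, the index set $\{\<x> \in \mathcal{X} : \<x> \neq \<x>_0\}$ over which the ratio $\norm{\<f>(\<x>) - \<f>(\<x>_0)}/\norm{\<x> - \<x>_0}$ is supremized to obtain $L(\<x>_0,\mathcal{X})$ is contained in the corresponding index set $\{\<x> \in \mathcal{S} : \<x> \neq \<x>_0\}$ for $L(\<x>_0,\mathcal{S})$. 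Taking the supremum of the same (nonnegative) quantity over the larger set can only increase it, which gives \eqref{eq:local_lipschitz_superset}.

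There is no real obstacle here; the statement is a routine consequence of the definitions, and the only subtlety is remembering that \eqref{eq:local_lipschitz_constant} is not valid when the input set is the singleton $\{\<x>_0\}$, which is why that case is handled first via \eqref{eq:local_lipschitz_constant_original} directly.
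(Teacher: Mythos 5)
Your proposal is correct and follows essentially the same route as the paper's proof: the monotonicity of the supremum in \eqref{eq:local_lipschitz_constant} over nested sets. The only difference is that you explicitly dispose of the degenerate case $\mathcal{X} = \{\<x>_0\}$ via \eqref{eq:local_lipschitz_constant_original}, which the paper omits here but addresses separately in its discussion of that special case.
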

\begin{proof}
The local Lipschitz constant $L(\<x>_0, \mathcal{S})$ is a supremization over $\mathcal{S}$ which is a superset of $\mathcal{X}$. Since $\mathcal{S}$ contains all elements of $\mathcal{X}$, the supremization over $\mathcal{S}$ results in a value at least as large as the supremization over $\mathcal{X}$, which implies \eqref{eq:local_lipschitz_superset}.
\end{proof}

The second property of local Lipschitz constants is that if all possible inputs of the function $\<f>$ are within a distance $\epsilon$ of the nominal input $\<x>_0$, then all possible outputs of $\<f>$ will be within a distance $\epsilon L(\<x>_0, \mathcal{X})$ of the nominal output $\<f>(\<x>_0)$. We will use this property in our analysis to transfer input bounds from one layer of a network to the next.

\begin{prp} \label{prp:perturbation}
Consider a function $\<f>$, with nominal input $\<x>_0$ and input set $\mathcal{X}$. If there exists an $\epsilon \geq 0$ such that $\norm{\<x> - \<x>_0} \leq \epsilon, \forall \<x> \in \mathcal{X}$, then the deviation of $\<f>(\<x>)$ from $\<f>(\<x>_0)$ is norm-bounded by $\epsilon L(\<x>_0, \mathcal{X})$ for all $\<x> \in \mathcal{X}$:
\begin{equation}
\norm{\<f>(\<x>) - \<f>(\<x>_0)} \leq \epsilon L(\<x>_0, \mathcal{X}), ~~ \forall \<x> \in \mathcal{X} .
\end{equation}
\end{prp}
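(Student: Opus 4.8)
The plan is to invoke the defining inequality of the local Lipschitz constant directly and then substitute the hypothesized bound on the input deviation. First I would recall that, by Definition~2 and in particular \eqref{eq:local_lipschitz_constant_original} applied with the set $\mathcal{X}$, the quantity $L(\<x>_0, \mathcal{X})$ satisfies $\norm{\<f>(\<x>) - \<f>(\<x>_0)} \leq L(\<x>_0, \mathcal{X}) \norm{\<x> - \<x>_0}$ for every $\<x> \in \mathcal{X}$. This holds irrespective of whether $L(\<x>_0, \mathcal{X})$ is given by the supremum formula \eqref{eq:local_lipschitz_constant} or, in the degenerate case $\mathcal{X} = \{\<x>_0\}$, equals $0$; in the latter case the claimed inequality is trivial since both sides vanish.

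Next I would use the hypothesis that there exists $\epsilon \geq 0$ with $\norm{\<x> - \<x>_0} \leq \epsilon$ for all $\<x> \in \mathcal{X}$. Since $L(\<x>_0, \mathcal{X}) \geq 0$ by definition, multiplying this bound by $L(\<x>_0, \mathcal{X})$ preserves the inequality, giving $L(\<x>_0, \mathcal{X}) \norm{\<x> - \<x>_0} \leq \epsilon L(\<x>_0, \mathcal{X})$ for every $\<x> \in \mathcal{X}$. Chaining this with the previous display yields $\norm{\<f>(\<x>) - \<f>(\<x>_0)} \leq \epsilon L(\<x>_0, \mathcal{X})$ for all $\<x> \in \mathcal{X}$, which is exactly the desired conclusion.

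There is no substantive obstacle here: the statement is an immediate corollary of the definition of $L(\<x>_0, \mathcal{X})$. The only point warranting a moment's care is the sign of $L(\<x>_0, \mathcal{X})$ when transitioning from a bound on $\norm{\<x> - \<x>_0}$ to a bound on $L(\<x>_0, \mathcal{X})\norm{\<x>-\<x>_0}$, which is handled by noting that Lipschitz constants are nonnegative, together with the implicit convention (noted after Definition~2) that if $\mathcal{X} = \{\<x>_0\}$ one takes $L(\<x>_0, \mathcal{X}) = 0$ and the bound holds vacuously.
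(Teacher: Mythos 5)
Your proposal is correct and follows exactly the paper's own argument: apply the defining inequality \eqref{eq:local_lipschitz_constant_original} and then substitute the hypothesis $\norm{\<x> - \<x>_0} \leq \epsilon$, using nonnegativity of $L(\<x>_0,\mathcal{X})$. The extra remarks about the degenerate case $\mathcal{X} = \{\<x>_0\}$ are a harmless elaboration of the same two-line proof.
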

\begin{proof}
From \eqref{eq:local_lipschitz_constant_original} we have $\norm{\<f>(\<x>) - \<f>(\<x>_0)} \leq L(\<x>_0, \mathcal{X}) \norm{\<x> - \<x>_0}$.
Since $\norm{\<x> - \<x>_0} \leq \epsilon$, then $\norm{\<f>(\<x>) - \<f>(\<x>_0)} \leq \epsilon L(\<x>_0, \mathcal{X})$.
\end{proof}

The third property of local Lipschitz constants is that a composite function is upper bounded by the product of the local Lipschitz constants of each of the composing functions. This property is often applied to global Lipschitz constants to determine global bounds of feedforward networks \cite{Szegedy}. We now present this result for the local case.

\begin{prp} \label{prp:composition}
Consider a function, $\<f>$, with nominal input $\<x>_0$ and input set $\mathcal{X}$. Assume $\<f>$ is the composition of functions $\<g>$ and $\<h>$, i.e., $\<f> = \<h> \circ \<g>$. Define $\<y>_0 = \<g>(\<x>_0)$ and let $\mathcal{Y}$ denote the range of $\<g>$, i.e., $\mathcal{Y} = \{ \<g>(\<x>) ~~ | ~~ \<x> \in \mathcal{X} \}$.
Let $L_{\<f>}$, $L_{\<g>}$ and $L_{\<h>}$ denote the local Lipschitz constants of function $\<f>$, $\<g>$ and $\<h>$, respectively.
The following inequality holds
\begin{equation}
L_{\<f>}(\<x>_0, \mathcal{X}) \leq L_{\<g>}(\<x>_0, \mathcal{X}) L_{\<h>}(\<y>_0, \mathcal{Y}) . 
\label{eq:local_lipschitz_chain_rule}
\end{equation}
\end{prp}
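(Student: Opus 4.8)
The plan is to argue directly from the defining inequality \eqref{eq:local_lipschitz_constant_original} rather than from the quotient form \eqref{eq:local_lipschitz_constant}; this keeps the argument valid even when points coincide or when $\mathcal{X}$ or $\mathcal{Y}$ degenerates to a single point. Concretely, I would fix an arbitrary $\<x> \in \mathcal{X}$ and show that the number $L_{\<g>}(\<x>_0,\mathcal{X}) L_{\<h>}(\<y>_0,\mathcal{Y})$ is an admissible Lipschitz constant for $\<f>$ at that $\<x>$; since \eqref{eq:local_lipschitz_chain_rule} asserts an inequality with the \emph{minimal} such constant on the left, establishing admissibility for every $\<x>$ is enough.

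For the main chain of inequalities: by definition of $\mathcal{Y}$ we have $\<g>(\<x>) \in \mathcal{Y}$ and $\<y>_0 = \<g>(\<x>_0) \in \mathcal{Y}$, so applying the local Lipschitz property of $\<h>$ to the pair $\<g>(\<x>), \<y>_0$ gives
\[
\norm{\<h>(\<g>(\<x>)) - \<h>(\<y>_0)} \leq L_{\<h>}(\<y>_0,\mathcal{Y}) \, \norm{\<g>(\<x>) - \<y>_0} .
\]
Applying the local Lipschitz property of $\<g>$ to the pair $\<x>, \<x>_0$ gives $\norm{\<g>(\<x>) - \<g>(\<x>_0)} \leq L_{\<g>}(\<x>_0,\mathcal{X}) \norm{\<x> - \<x>_0}$. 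Using $\<h>(\<g>(\<x>)) = \<f>(\<x>)$, $\<h>(\<y>_0) = \<h>(\<g>(\<x>_0)) = \<f>(\<x>_0)$, and $\<y>_0 = \<g>(\<x>_0)$, substituting the second bound into the first yields
\[
\norm{\<f>(\<x>) - \<f>(\<x>_0)} \leq L_{\<g>}(\<x>_0,\mathcal{X}) L_{\<h>}(\<y>_0,\mathcal{Y}) \, \norm{\<x> - \<x>_0} .
\]
As $\<x> \in \mathcal{X}$ was arbitrary, this holds for all $\<x> \in \mathcal{X}$, so $L_{\<g>}(\<x>_0,\mathcal{X}) L_{\<h>}(\<y>_0,\mathcal{Y})$ satisfies \eqref{eq:local_lipschitz_constant_original} for $\<f>$, and minimality of $L_{\<f>}(\<x>_0,\mathcal{X})$ gives \eqref{eq:local_lipschitz_chain_rule}.

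The argument is short, and I expect the only delicate point to be the degenerate cases — in particular whether $L_{\<h>}(\<y>_0,\mathcal{Y})$ is meaningful when $\mathcal{Y} = \{\<y>_0\}$ (equivalently, when $\<g>$ is constant on $\mathcal{X}$), and whether $\<g>(\<x>) = \<y>_0$ can occur for some $\<x> \neq \<x>_0$. Both are handled cleanly by working from \eqref{eq:local_lipschitz_constant_original}: if $\mathcal{Y} = \{\<y>_0\}$ then $\<f>$ is constant on $\mathcal{X}$ and both sides of \eqref{eq:local_lipschitz_chain_rule} collapse consistently (using the convention, noted after \eqref{eq:local_lipschitz_constant}, that a Lipschitz constant over a singleton input set is $0$); and if $\<g>(\<x>) = \<y>_0$ for some $\<x> \neq \<x>_0$, the first displayed inequality is trivially true with zero on the right-hand side. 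Everything else is a direct substitution.
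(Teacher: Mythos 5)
Your proof is correct, and it takes a genuinely different route from the paper's. The paper works with the quotient form \eqref{eq:local_lipschitz_constant}: it writes the supremand for $\<f>$ as a product of two fractions by multiplying and dividing by $\norm{\<g>(\<x>) - \<g>(\<x>_0)}$, then bounds the supremum of the product by the product of the suprema. That manipulation forces the paper to restrict the supremization to points with $\<g>(\<x>) \neq \<g>(\<x>_0)$ and to justify separately why discarding those points is harmless, and to treat the degenerate cases $\mathcal{X} = \{\<x>_0\}$ and $\mathcal{Y} = \{\<y>_0\}$ by a side remark. Your argument instead works from the defining inequality \eqref{eq:local_lipschitz_constant_original}: you verify that $L_{\<g>}(\<x>_0,\mathcal{X}) L_{\<h>}(\<y>_0,\mathcal{Y})$ is an admissible constant for $\<f>$ by chaining the two Lipschitz inequalities (using $\<g>(\<x>) \in \mathcal{Y}$ and the nonnegativity of $L_{\<h>}$), and then invokes minimality of $L_{\<f>}(\<x>_0,\mathcal{X})$. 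This buys you exactly what you anticipated: the cases $\<g>(\<x>) = \<g>(\<x>_0)$ and the singleton input sets never need special handling, because the defining inequality is trivially satisfied there rather than producing a $0/0$ quotient. The paper's quotient-based presentation is closer in spirit to how the constants are actually computed elsewhere in the paper, but your version is the cleaner proof of this particular proposition; the only implicit assumption shared by both is that the relevant local Lipschitz constants are finite so that the minimal admissible $L$ exists.
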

\begin{proof}
We have 
\begin{align}
L_{\<f>}&(\<x>_0, \mathcal{X}) \\
&= \sup_{\substack{\<x> \in \mathcal{X} \\ \<x> \neq \<x>_0}} \frac{\norm{\<f>(\<x>) - \<f>(\<x>_0)}}{\norm{\<x> - \<x>_0}} \\
&= \sup_{\substack{\<x> \in \mathcal{X} \\ \<x> \neq \<x>_0}} \frac{\norm{\<h>(\<g>(\<x>)) - \<h>(\<g>(\<x>_0))}}{\norm{\<x> - \<x>_0}} \\
&= \sup_{\substack{\<x> \in \mathcal{X} \\ \<x> \neq \<x>_0 \\ \<g>(\<x>) \neq \<g>(\<x>_0)}} \frac{\norm{\<g>(\<x>) - \<g>(\<x>_0)}}{\norm{\<x> - \<x>_0}}
\frac{\norm{\<h>(\<g>(\<x>)) - \<h>(\<g>(\<x>_0))}}{\norm{\<g>(\<x>) - \<g>(\<x>_0)}} \\
&\leq \sup_{\substack{\<x> \in \mathcal{X} \\ \<x> \neq \<x>_0}} \frac{\norm{\<g>(\<x>) - \<g>(\<x>_0)}}{\norm{\<x> - \<x>_0}}
\sup_{\substack{\<y> \in \mathcal{Y} \\ \<y> \neq \<y>_0}} \frac{\norm{\<h>(\<y>) - \<h>(\<y>_0)}}{\norm{\<y> - \<y>_0}} \\
&= L_{\<g>}(\<x>_0, \mathcal{X}) L_{\<h>}(\<y>_0, \mathcal{Y}) .
\end{align}




In the derivation above, points such that $\<x> \neq \<x>_0$, $\<g>(\<x>) \neq \<g>(\<x>_0)$, and $\<y> \neq \<y>_0$ can be excluded because these points can only achieve the supremum when $L_{\<f>}(\<x>_0, \mathcal{X}) = 0$, in which case \eqref{eq:local_lipschitz_chain_rule} always holds.
Additionally, in the special cases that $\mathcal{X} = \{ \<x>_0 \}$ or $\mathcal{Y} = \{ \<y>_0 \}$, then $L_{\<f>}(\<x>_0, \mathcal{X}) = 0$ and \eqref{eq:local_lipschitz_chain_rule} always holds.
\end{proof}
Using induction, the relationship in Proposition \ref{prp:composition} can be applied to compositions of more than two functions.
As a result, since a feedforward neural network is a composition of functions, we can bound the local Lipschitz constant of the network by the product of the local Lipschitz constants of each layer.



\section{Local Lipschitz constants of ReLUs} \label{sec:relu_lipschitz}


The rectified linear unit (ReLU) is widely used as an activation function in deep neural networks.
The ReLU is simply the maximum of an input and zero: $\relu(y) = \max(0,y)$ where $\relu : \mathbb{R} \rightarrow \mathbb{R}$.
The ReLU can be applied to a vector by taking the ReLU of each element: $\<relu>(\<y>) = \<max>(\<0>,\<y>)$ where $\<relu> : \mathbb{R}^m \rightarrow \mathbb{R}^m$.
Note that in this section we will write the ReLU as a function of $y$ rather than $x$ to match with our notation in Section \ref{sec:affine_relu_lipschitz}.


\begin{figure}[ht]
\centering
\subfloat[$y_0 \leq 0$]{\includegraphics[width=.20\textwidth]{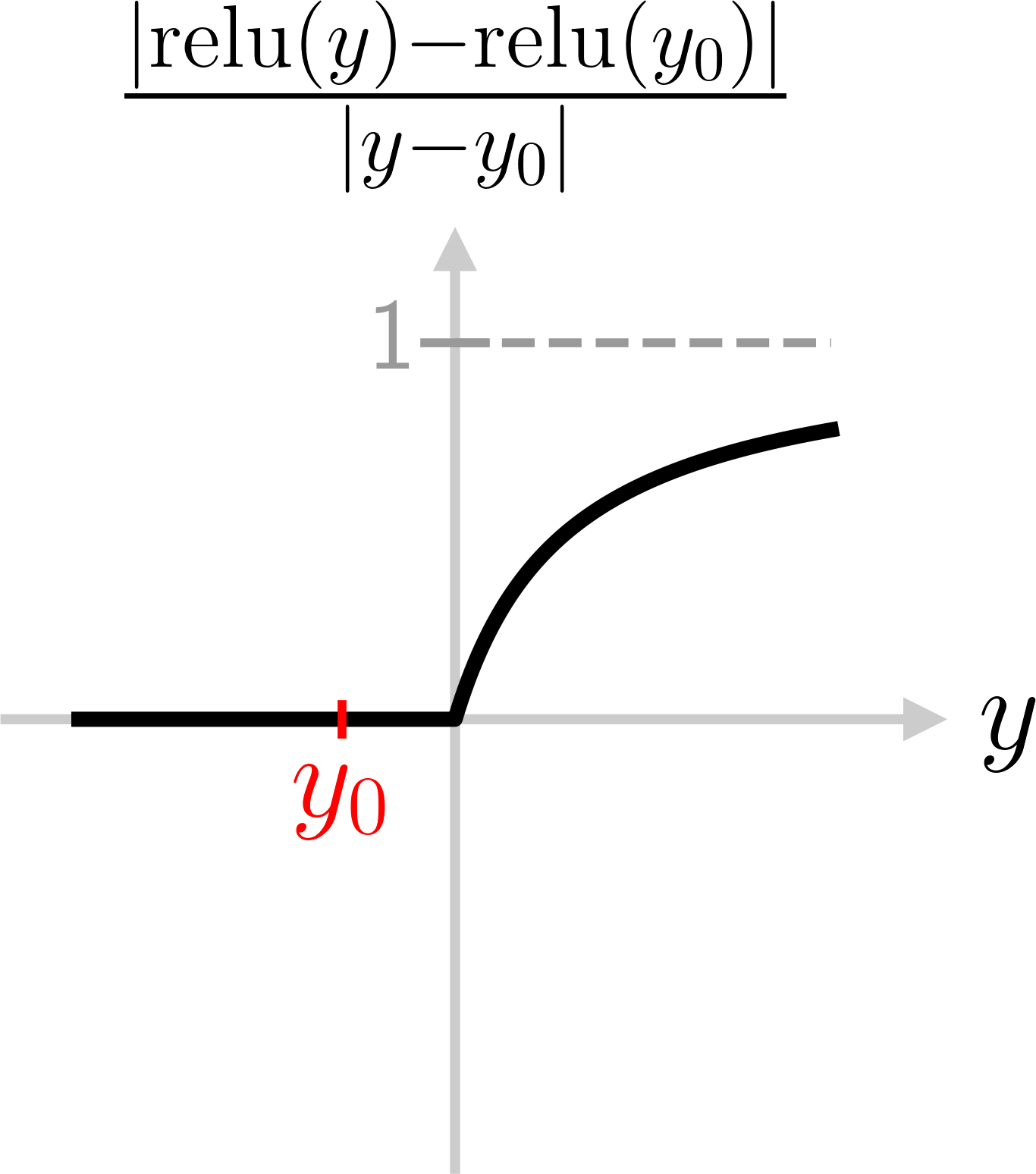}\label{subfig:relu_lipschitz_neg}}
\hspace{20pt}
\subfloat[$y_0 > 0$]{\includegraphics[width=.20\textwidth]{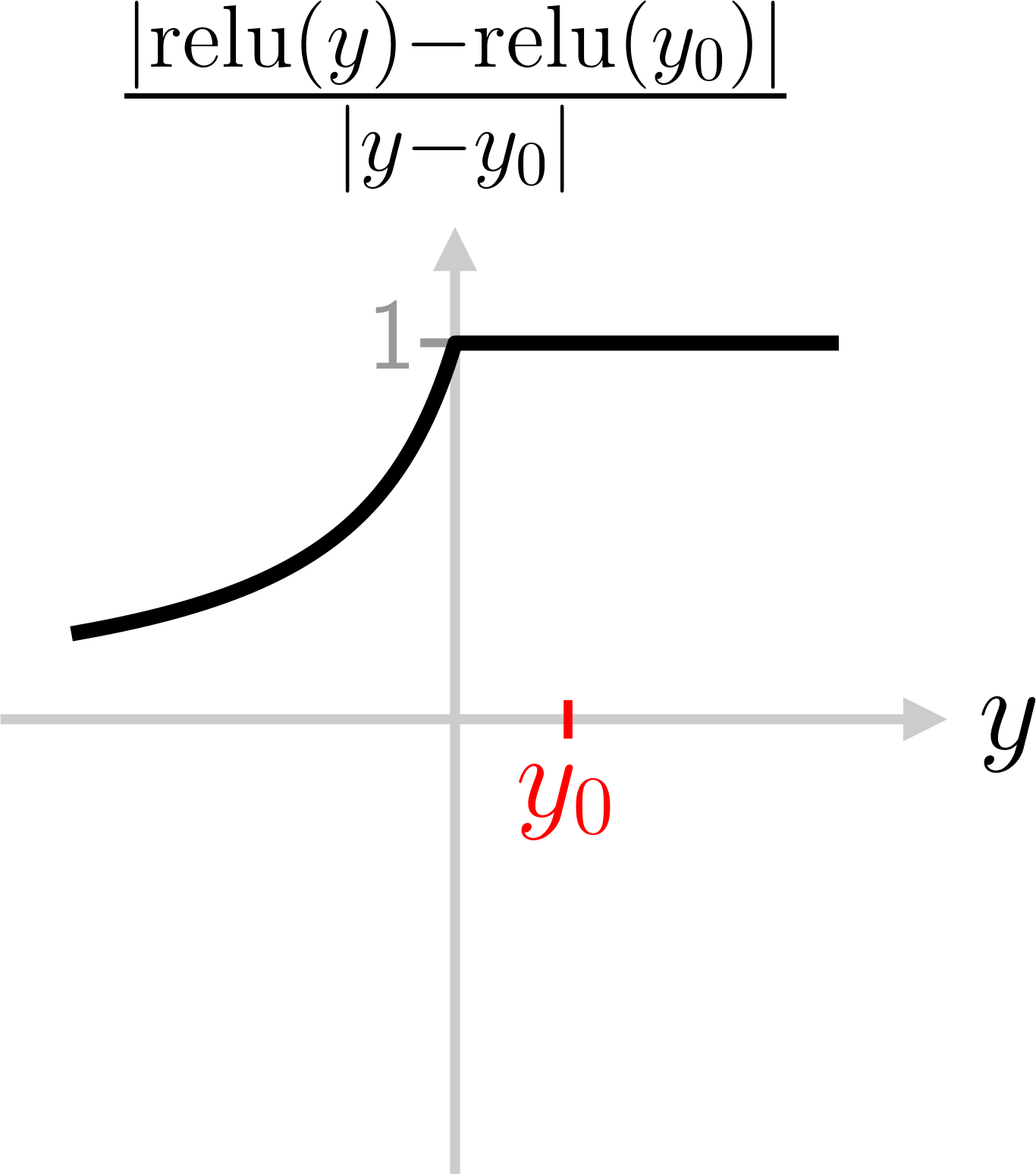}\label{subfig:relu_lipschitz_pos}}
\caption{Diagram showing the ReLU function and its local Lipschitz fraction for (a) negative and (b) positive nominal inputs $y_0$. In both cases, the fraction is non-decreasing, and is therefore maximized at the largest possible $y$. This property is formalized in Theorem \ref{thm:relu}.}
\label{fig:lipschitz_relu}
\end{figure}

\begin{figure*}[ht]
\begin{center}
\includegraphics[width=.90\textwidth]{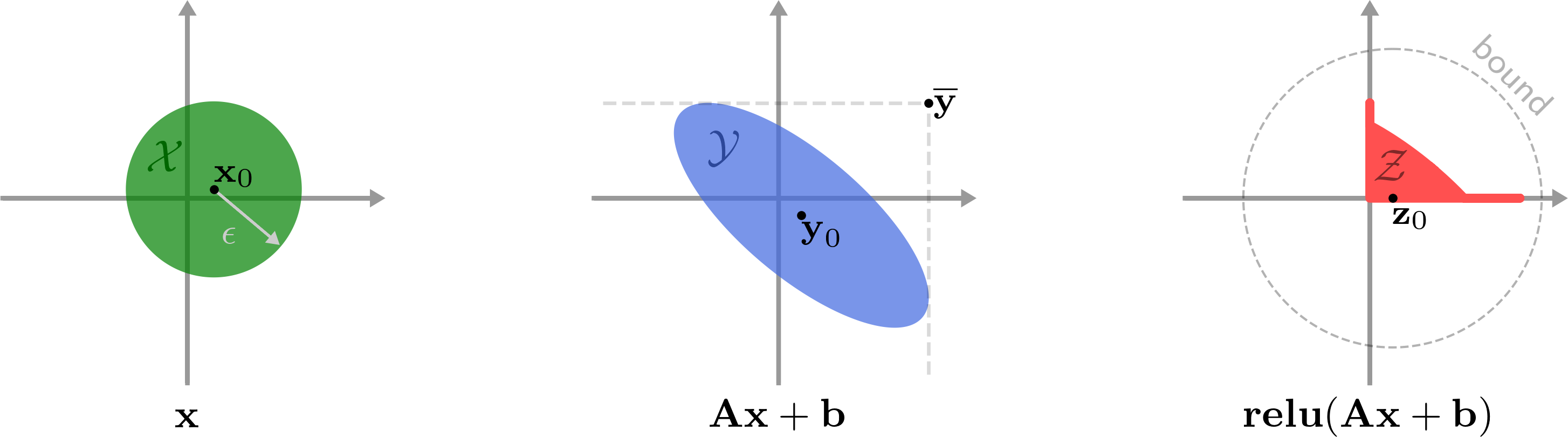}
\end{center}
\caption{Geometric visualization of the set $\mathcal{X}$ (a ball of size $\epsilon$) transformed through affine and ReLU functions. The set $\mathcal{X}$ is transformed by the affine transformation into the set $\mathcal{Y}$ which is transformed by the ReLU into the set $\mathcal{Z}$. The variable $\<x>_0$ is the nominal input which is transformed into $\<y>_0$ by the affine function and then into $\<z>_0$ by the ReLU function. The ``bound'' refers to a norm bound on the output of the ReLU, which in this paper is computed by Theorem \ref{thm:aff_relu} and has magnitude $\epsilon \norm{\<R> \<A> \<D>}$.}
\label{fig:transformation}
\end{figure*}

We will now determine the local Lipschitz constant of the scalar ReLU function. This result will be used in bounding the local Lipschitz constant of the affine-ReLU function.
\begin{thm} \label{thm:relu}
Consider the ReLU function of a scalar value $y \in \mathbb{R}$. Let $y_0 \in \mathbb{R}$ denote the nominal input, let $\mathcal{Y} \subset \mathbb{R}$ denote the set of permissible inputs, and let $\bar{y} \in \mathbb{R}$ denote the largest element of $\mathcal{Y}$ such that $\bar{y} \neq y_0$. The local Lipschitz constant of the ReLU function is
\begin{align}
L(y_0,\mathcal{Y}) =
\begin{cases}
0, & \mathcal{Y} = \{ y_0 \} \\
\frac{\relu(\bar{y}) - \relu(y_0)}{\bar{y} - y_0}, & \text{otherwise.} \\
\end{cases}
\label{eq:relu_local_lipschitz_constant}
\end{align}
\end{thm}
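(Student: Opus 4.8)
The plan is to reduce the statement to a monotonicity property of the ReLU's difference quotient. First I would dispose of the degenerate case: if $\mathcal{Y} = \{y_0\}$, then the inequality \eqref{eq:local_lipschitz_constant_original} holds for the only admissible point $y = y_0$ regardless of $L$, so the minimal such $L$ is $0$, matching the first branch of \eqref{eq:relu_local_lipschitz_constant}.

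For the nondegenerate case I would work from \eqref{eq:local_lipschitz_constant}, so that $L(y_0,\mathcal{Y}) = \sup_{y\in\mathcal{Y},\,y\neq y_0} |\relu(y)-\relu(y_0)| / |y - y_0|$. Since $\relu$ is non-decreasing, $\relu(y)-\relu(y_0)$ and $y-y_0$ carry the same sign (or the numerator vanishes), so the absolute values can be dropped and the quantity being supremized equals the secant slope $\phi(y) := (\relu(y)-\relu(y_0))/(y-y_0) \ge 0$, defined for every $y \neq y_0$.

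The crux is to show $\phi$ is non-decreasing on $\mathbb{R}\setminus\{y_0\}$. I would obtain this from convexity of $\relu$: for a convex function the difference quotient about a fixed point is a non-decreasing function of the other argument (the standard monotone-secant/three-slopes property), and this monotonicity is valid uniformly, including across the base point $y_0$. Equivalently --- and this is the picture in Fig.~\ref{fig:lipschitz_relu} --- one can split into $y_0 \le 0$ and $y_0 > 0$, write $\phi$ piecewise ($\phi \equiv 0$ then $\phi(y)=y/(y-y_0)$ when $y_0\le 0$; $\phi(y)=y_0/(y_0-y)$ then $\phi\equiv 1$ when $y_0>0$), and check that each piece is non-decreasing and that the pieces join monotonically at the breakpoint $y=0$ (the kink of $\relu$). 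Either way, once $\phi$ is non-decreasing the supremum over $\mathcal{Y}\setminus\{y_0\}$ is attained at its largest element $\bar y$, giving $L(y_0,\mathcal{Y}) = \phi(\bar y) = (\relu(\bar y)-\relu(y_0))/(\bar y - y_0)$, which is the second branch of \eqref{eq:relu_local_lipschitz_constant}.

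The only real care needed --- and the step I would flag as the main pitfall --- is that the monotonicity of $\phi$ must survive passage through two distinguished points, the kink of $\relu$ at the origin and the base point $y_0$ itself; carrying this out by hand via sign bookkeeping is where an error would creep in, which is precisely why leaning on convexity of $\relu$ (so that monotonicity of the secant slope is automatic) is the clean route. A minor secondary remark worth including is that the statement implicitly assumes $\mathcal{Y}$ has a largest element distinct from $y_0$ --- which holds in the paper's application, where $\mathcal{Y}$ is a closed interval about $y_0$ --- since otherwise the supremum need not be attained and ``$\bar y$'' would have to be read as a limit.
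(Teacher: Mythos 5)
Your proposal is correct, and in its ``equivalently'' form it reproduces the paper's argument exactly: the paper also dispatches the $\mathcal{Y}=\{y_0\}$ case by definition, tabulates the sign cases of $y$ and $y_0$, writes the Lipschitz fraction piecewise (identically $0$ then $y/(y-y_0)$ for $y_0\le 0$; $y_0/(y_0-y)$ then identically $1$ for $y_0>0$), and verifies monotonicity by differentiating each nonconstant piece. The genuine difference is your primary route: invoking convexity of $\relu$ so that the monotone-secant (three-slopes) property gives the non-decreasing difference quotient $\phi(y)=(\relu(y)-\relu(y_0))/(y-y_0)$ in one stroke, uniformly across both the kink at the origin and the base point $y_0$. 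That is cleaner and more general --- it would extend verbatim to any convex, non-decreasing activation (e.g.\ softplus or ELU restricted to where it is convex), whereas the paper's sign bookkeeping is tied to the explicit piecewise-linear form of $\relu$; what the paper's version buys in exchange is a fully elementary, self-contained computation that doubles as the content of Fig.~\ref{fig:lipschitz_relu}. Your closing caveat --- that $\bar y$ must exist as an attained maximum of $\mathcal{Y}\setminus\{y_0\}$, which holds in the paper's application since $\mathcal{Y}_i$ is a closed interval centered at $y_{0,i}$ --- is a fair and worthwhile observation that the paper leaves implicit in the statement of Theorem~\ref{thm:relu}.
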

\begin{proof}
The first case in \eqref{eq:relu_local_lipschitz_constant} follows from the fact that the local Lipschitz constant of any function is zero if the nominal input is the only permissible input.
In all other cases, applying the definition of the local Lipschitz constant from \eqref{eq:local_lipschitz_constant} to this scenario yields
\begin{align}
L(y_0,\mathcal{Y}) = \sup_{y \in \mathcal{Y}} \frac{\abs{\relu(y) - \relu(y_0)}}{\abs{y - y_0}} .
\label{eq:local_lipschitz_relu_def}
\end{align}
We refer to the fraction in the RHS of \eqref{eq:local_lipschitz_relu_def} as the ``local Lipschitz fraction''. The table below shows what the local Lipschitz fraction evaluates to for different signs of $y$ and $y_0$:
\begin{center}
\begin{tabular}{ c c c c c }
\toprule
$y_0$ & $y$ & $\relu(y_0)$ & $\relu(y)$ & $\frac{\abs{\relu(y) - \relu(y_0)}}{\abs{y - y_0}}$ \\ 
\midrule
$\leq 0$ & $\leq 0$ & 0 & 0 & 0 \\
$\leq 0$ & $> 0$ & 0 & $y$ & $y/(y - y_0)$ \\
$> 0$ & $\leq 0$ & $y_0$ & 0 & $y_0/(y_0 - y)$ \\
$> 0$ & $> 0$ & $y_0$ & $y$ & 1 \\
\bottomrule
\end{tabular}
\end{center}

Using this table, we can break this problem down into the following two cases, and show that for each case the local Lipschitz fraction is non-decreasing in $y$, which implies that it is maximized at $y = \bar{y}$.

\textit{Case 1: $y_0 \leq 0$ (Fig. \ref{subfig:relu_lipschitz_neg})}: The local Lipschitz fraction equals 0 for non-positive $y$ and equals $y/(y-y_0)$ for positive $y$. The derivative of the latter expression with respect to $y$ is $-y_0/(y-y_0)^2$ which is non-negative since $y_0 \leq 0$.
So the local Lipschitz fraction is non-decreasing in $y$ which implies it is maximized at the largest possible $y \neq y_0$ (i.e., $\bar{y}$), so \eqref{eq:relu_local_lipschitz_constant} holds in this case.

\textit{Case 2: $y_0 > 0$ (Fig. \ref{subfig:relu_lipschitz_pos})}: The local Lipschitz fraction equals 1 for positive $y$ and equals $y_0/(y_0 - y)$ for non-positive $y$. The derivative of the latter expression with respect to $y$ is $y_0/(y_0-y)^2$ which is positive since $y_0 > 0$.
So, the local Lipschitz fraction is non-decreasing in $y$ which implies it is maximized at the largest possible $y \neq y_0$ (i.e., $\bar{y}$), so \eqref{eq:relu_local_lipschitz_constant} holds in this case as well.

\end{proof}
\section{Local Lipschitz bounds for affine-ReLU functions} \label{sec:affine_relu_lipschitz}

\subsection{Affine-ReLU functions} \label{sec:affine_relu_functions}

Affine functions are ubiquitous in neural networks as convolution, fully-connected, and normalization operations are all affine.
An affine function can be written as
\begin{equation}
\<y> = \<A> \<x> + \<b>
\label{eq:y}
\end{equation}
where $\<A> \in \mathbb{R}^{m \times n}$, $\<x> \in \mathbb{R}^n$, and $\<b> \in \mathbb{R}^m$.
The inputs and outputs of affine functions in neural networks are often multi-dimensional arrays, but it is mathematically equivalent to consider them to be 1D vectors. So in this paper, $\<x>$ will often represent a multi-dimensional array that has been been reshaped into a 1D vector. Note that the global Lipschitz constant of an affine function is $\norm{\<A>}$.

We define an affine-ReLU function as a ReLU composed with an affine function, which can be written as
\begin{equation}
\<z> = \<relu>(\<A> \<x> + \<b>) .
\end{equation}
In a neural network, affine-ReLU functions represent one layer (e.g., convolution with a ReLU activation). Note that although they are commonly used in neural networks, we are not aware of any work that has directly analyzed affine-ReLU functions except for \cite{Dittmer}.

We denote $\<x>_0$ as the nominal input to the affine-ReLU function, $\<y>_0$ as the affine transformation of $\<x>_0$, and $\<z>_0$ as the ReLU transformation of $\<y>_0$:
\begin{align}
\begin{aligned}
\<y>_0 &\coloneqq \<A> \<x>_0 + \<b> \\
\<z>_0 &\coloneqq \<relu>(\<y>_0) = \<relu>(\<A> \<x>_0 + \<b>) .
\label{eq:y0_z0}
\end{aligned}
\end{align}
Recall that $\mathcal{X}$ denotes the domain of inputs, so we denote the range of the affine function as $\mathcal{Y}$, and the range of the ReLU function as $\mathcal{Z}$ (see Fig. \ref{fig:transformation}):
\begin{align}
\begin{aligned}
\mathcal{Y} &\coloneqq \{ \<A> \<x> + \<b> ~~ | ~~ \<x> \in \mathcal{X} \} \\
\mathcal{Z} &\coloneqq \{ \<relu>(\<y>) ~~ | ~~ \<y> \in \mathcal{Y} \} .
\label{eq:Y_Z}
\end{aligned}
\end{align}

\subsection{Input set} \label{sec:input_set}

In this paper, we will consider the input set $\mathcal{X}$ to be in a specific mathematical form.
We will use this form not just for affine-ReLU functions, but for other layers (e.g., max pooling) of a network as well.

More specifically, we consider the set $\mathcal{X}$ to be a ball of perturbations centered at the nominal input $\<x>_0$, for which certain entries of all vectors $\<x> \in \mathcal{X}$ are equal to zero. Knowledge of these entries will come from having determined that specific ReLUs of previous layers cannot be activated. Since certain dimensions of $\<x> \in \mathcal{X}$ are zero, we can think of $\mathcal{X}$ as a lower dimensional Euclidean ball embedded in a higher dimensional space (e.g., a 2D disk in three-dimensional space). We also denote perturbations about $\<x>_0$ as $\Delta \<x> \in \mathbb{R}^n$, and the maximum magnitude of these perturbations as $\epsilon \in \mathbb{R}$.

We will use a diagonal binary matrix $\<D> \in \mathbb{R}^{n \times n}$ to ensure the zero elements of $\mathcal{X}$ are enforced. We will often refer to this matrix as a ``domain-restriction matrix''.
Letting $x_i$ denote the $i^{th}$ entry of $\<x>$, we have
\begin{align}
d_i &\coloneqq
\begin{cases}
0, ~~~ x_i = 0 ~~~ \forall \<x> \in \mathcal{X} \\
1, ~~~ \text{otherwise}
\end{cases} \\
\<D> &\coloneqq \<diag>(d_1,\cdots,d_n)
\label{eq:di_D}
\end{align}
where $\<diag>: \mathbb{R}^n \rightarrow \mathbb{R}^{n \times n}$ forms a diagonal matrix from its inputs.
Note that if we do not know any input indices to be zero, then $\<D>$ will be the identity matrix.

Using $\<D>$ and the $\epsilon$ norm constraint, we can write the input $\<x>$ and input set $\mathcal{X}$ as
\begin{align}
\begin{aligned}
\<x> &= \<x>_0 + \<D> \Delta \<x> \\
\mathcal{X} &= \{ \<x>_0 + \<D> \Delta \<x> ~~ | ~~ \norm{\Delta \<x>} \leq \epsilon \} .
\end{aligned}
\label{eq:x_X}
\end{align}

\subsection{Upper bound on the affine function}

We can now substitute $\<x>$ and $\mathcal{X}$ from \eqref{eq:x_X} into the equations for $\<y>$ and $\mathcal{Y}$ from \eqref{eq:y} and \eqref{eq:Y_Z}. Letting $y_i$ and $y_{0,i}$ denote the $i^{th}$ elements of $\<y>$ and $\<y>_0$, respectively, and letting $\<a>_i^T \in \mathbb{R}^n$ denote the $i^{th}$ row of $\<A>$, we have
\begin{align}
\begin{aligned}
\<y> &= \<A> \<D> \Delta \<x> + \<y>_0 \\
y_i &= \<a>_i^T \<D> \Delta \<x> + y_{0,i} \\
\mathcal{Y} &= \{ \<A> \<D> \Delta \<x> + \<y>_0 ~~ | ~~ \norm{\Delta \<x>} \leq \epsilon \} .
\label{eq:y_yi_Y}
\end{aligned}
\end{align}

We now discuss how to compute an elementwise upper bound on $\mathcal{Y}$ (see center of Fig. \ref{fig:transformation}). First, for each $i$, we define the set of all $y_i$ values as
\begin{equation}
\mathcal{Y}_i \coloneqq \{ y_i ~~ | ~~ \<y> \in \mathcal{Y} \} .
\label{eq:Yi}
\end{equation}
Next, define the upper bound $\bar{y}_i$ and vector $\overline{\<y>}$ as follows:
\begin{align}
\bar{y}_i &\coloneqq \max_{y_i \in \mathcal{Y}_i} y_i \label{eq:overline_yi} \\
\overline{\<y>} &\coloneqq [\bar{y}_1 ~~ \cdots ~~ \bar{y}_m ]^T \in \mathbb{R}^m \label{eq:overline_y} .
\end{align}
The following proposition describes how to compute $\overline{\<y>}$.
\begin{prp} \label{prp:yi_bar}
Consider the upper bound $\bar{y}_i$ from \eqref{eq:overline_yi} where $\mathcal{Y}_i$ is given by \eqref{eq:Yi}, and $y_i$ and $\mathcal{Y}$ are given by \eqref{eq:y_yi_Y}. The equation for $\bar{y}_i$ is
\begin{align}
\bar{y}_i = \epsilon \norm{\<a>_i^T \<D>} + y_{0,i} .
\label{eq:ybar_i}
\end{align}
\end{prp}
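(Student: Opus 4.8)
The plan is to evaluate the maximization in \eqref{eq:overline_yi} directly using the formula for $y_i$ given in \eqref{eq:y_yi_Y}. First I would substitute $y_i = \<a>_i^T \<D> \Delta \<x> + y_{0,i}$ into $\bar{y}_i = \max_{y_i \in \mathcal{Y}_i} y_i$. Since the term $y_{0,i}$ is constant with respect to $\Delta \<x>$, and the set $\mathcal{Y}_i$ is exactly the image of $\{\Delta \<x> : \norm{\Delta \<x>} \leq \epsilon\}$ under the map $\Delta \<x> \mapsto \<a>_i^T \<D> \Delta \<x> + y_{0,i}$, this yields
\begin{equation*}
\bar{y}_i = y_{0,i} + \max_{\norm{\Delta \<x>} \leq \epsilon} \<a>_i^T \<D> \Delta \<x> .
\end{equation*}
So the whole proposition reduces to evaluating the maximum of a linear functional of $\Delta \<x>$ over a Euclidean ball of radius $\epsilon$.

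For that remaining step I would invoke the Cauchy--Schwarz inequality (equivalently, the fact that the $2$-norm is self-dual). Because $\<D>$ is diagonal, hence symmetric, we have $\<a>_i^T \<D> \Delta \<x> = (\<D> \<a>_i)^T \Delta \<x> \leq \norm{\<D> \<a>_i}\,\norm{\Delta \<x>} \leq \epsilon \norm{\<D> \<a>_i}$. Equality is attained: when $\<D> \<a>_i \neq \<0>$, take $\Delta \<x> = \epsilon\, \<D> \<a>_i / \norm{\<D> \<a>_i}$, which satisfies $\norm{\Delta \<x>} = \epsilon$ and gives $\<a>_i^T \<D> \Delta \<x> = \epsilon \norm{\<D> \<a>_i}$; when $\<D> \<a>_i = \<0>$, the maximum is $0 = \epsilon \norm{\<D> \<a>_i}$ trivially. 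Noting that $\norm{\<D> \<a>_i} = \norm{\<a>_i^T \<D>}$ (the norm of a vector is unchanged by transposition), we obtain $\bar{y}_i = \epsilon \norm{\<a>_i^T \<D>} + y_{0,i}$, which is \eqref{eq:ybar_i}.

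There is no real obstacle here: the argument is essentially a restatement of the self-duality of the $2$-norm, specialized to the affine map restricted to the coordinate subspace $\{\<x> : (\<I}-\<D>)\<x> = \<0>\}$. The only point worth a sentence of care is checking that the proposed maximizer $\Delta \<x>$ is admissible — but the only constraint on $\mathcal{X}$ besides the norm bound, namely that certain coordinates of $\<x>$ are pinned to their nominal values, has already been absorbed into $\<D>$ in the parametrization \eqref{eq:x_X}, so $\Delta \<x>$ ranges over a full ball with no side constraints and any $\Delta \<x>$ with $\norm{\Delta \<x>} \leq \epsilon$ is allowed.
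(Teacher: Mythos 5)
Your proof is correct and follows essentially the same route as the paper: substitute the affine expression for $y_i$, drop the constant $y_{0,i}$, and maximize the linear functional $\Delta \<x> \mapsto (\<D>\<a>_i)^T \Delta\<x>$ over the $\epsilon$-ball by aligning $\Delta\<x>$ with $\<D>\<a>_i$. Your explicit invocation of Cauchy--Schwarz and the treatment of the degenerate case $\<D>\<a>_i = \<0>$ are minor refinements of the paper's argument, not a different approach.
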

\begin{proof}
Substituting \eqref{eq:y_yi_Y} into \eqref{eq:overline_yi} yields
\begin{align}
\bar{y}_i &= \max_{\norm{\Delta \<x>} \leq \epsilon} ~ \<a>_i^T \<D> \Delta \<x> + y_{0,i} .
\label{eq:yi_maximization}
\end{align}
In this maximization, we can ignore $y_{0,i}$ ignored because it is constant. The remaining term is the dot product of $\<D> \<a>_i$ and $\Delta \<x>$, which will be maximized when $\Delta \<x>$ points in the direction of $\<D> \<a>_i$ and has maximum magnitude of $\epsilon$.
Therefore, the value of $\Delta \<x>$ that maximizes \eqref{eq:yi_maximization} is $\epsilon \<D> \<a>_i / \norm{\<D> \<a>_i}$. Plugging this expression into the RHS of \eqref{eq:yi_maximization} yields \eqref{eq:ybar_i}.
\end{proof}

\subsection{Local Lipschitz constant upper bound}

We can now derive a bound on the local Lipschitz constant of an affine-ReLU function.
\begin{thm} \label{thm:aff_relu}
Consider the affine-ReLU function $\<relu>(\<A> \<x> + \<b>)$ with nominal input $\<x>_0$, and input set $\mathcal{X}$ from \eqref{eq:x_X} with domain-restriction matrix $\<D>$ from \eqref{eq:di_D}.
Let $\<y>$ and $\<y>_0$ denote the output and nominal output of the affine function as in \eqref{eq:y} and \eqref{eq:y0_z0}, respectively.
Let $y_i$ and $y_{0,i}$ denote the $i^{th}$ element of $\<y>$ and $\<y>_0$, respectively.
Let $\mathcal{Y}_i$ denote the set of all $y_i$ values for $\<y> \in \mathcal{Y}$ as in \eqref{eq:Yi}, and let $\bar{y}_i$ denote the maximum of all $y_i \in \mathcal{Y}_i$ as in \eqref{eq:overline_yi}. 
Define $r_i$ and $\<R>$ as follows:
\begin{align}
\begin{aligned}
r_i &\coloneqq 
\begin{cases}
0, & \mathcal{Y}_i = \{ y_{0,i} \} \\
\frac{\relu(\bar{y}_i) - \relu(y_{0,i})}{\bar{y}_i - y_{0,i}}, & \text{otherwise} \\
\end{cases} \\
\<R> &\coloneqq \<diag>(r_1, \cdots, r_m) .
\label{eq:ri_R}
\end{aligned}
\end{align}
The following is an upper bound on the affine-ReLU function's local Lipschitz constant:
\begin{equation}
L( \<x>_0, \mathcal{X} ) \leq \norm{\<R> \<A> \<D>} .
\label{eq:affine_relu_Lipschitz_bound}
\end{equation}
\end{thm}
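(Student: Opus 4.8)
The plan is to bound the local Lipschitz fraction $\norm{\<relu>(\<A>\<x>+\<b>) - \<relu>(\<A>\<x>_0+\<b>)} / \norm{\<x>-\<x>_0}$ directly, by inserting a diagonal scaling matrix $\<R>$ that captures the per-coordinate Lipschitz behavior of the ReLU. First I would write $\<x> = \<x>_0 + \<D>\Delta\<x>$ with $\norm{\Delta\<x>}\leq\epsilon$ as in \eqref{eq:x_X}, so that $\<y>-\<y>_0 = \<A>\<D>\Delta\<x>$. The key elementwise claim is that $\abs{\relu(y_i) - \relu(y_{0,i})} \leq r_i \abs{y_i - y_{0,i}}$ for every $i$ and every $\<y>\in\mathcal{Y}$, where $r_i$ is defined in \eqref{eq:ri_R}. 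This is exactly Theorem \ref{thm:relu} applied coordinatewise: since $\mathcal{Y}_i\subset\mathbb{R}$ and $\bar y_i = \max_{y_i\in\mathcal{Y}_i} y_i$, the scalar $r_i$ is the local Lipschitz constant $L(y_{0,i},\mathcal{Y}_i)$ of the scalar ReLU (the two cases of \eqref{eq:ri_R} matching the two cases of \eqref{eq:relu_local_lipschitz_constant}, using that $\bar y_i\neq y_{0,i}$ whenever $\mathcal{Y}_i\neq\{y_{0,i}\}$), and the defining inequality \eqref{eq:local_lipschitz_constant_original} of the local Lipschitz constant gives the bound. One subtlety to handle carefully: Theorem \ref{thm:relu} requires $\bar y$ to be the largest element \emph{not equal} to $y_0$; here I would note that if $\mathcal{Y}_i\neq\{y_{0,i}\}$ then $\bar y_i$ (the unrestricted max) is automatically $\neq y_{0,i}$, because the max over a ball of affine images either strictly exceeds $y_{0,i}$ or the set collapses to the single point $y_{0,i}$ — I would verify this using \eqref{eq:ybar_i}, $\bar y_i = \epsilon\norm{\<a>_i^T\<D>} + y_{0,i}$, which equals $y_{0,i}$ precisely when $\<a>_i^T\<D> = \<0>$, i.e. when $\mathcal{Y}_i = \{y_{0,i}\}$.

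Next I would assemble these scalar bounds into a vector statement. Writing $\abs{\cdot}$ elementwise, the coordinatewise inequalities say $\abs{\<relu>(\<y>) - \<relu>(\<y>_0)} \leq \<R>\abs{\<y>-\<y>_0}$ entrywise, with $\<R>$ nonnegative diagonal. Taking 2-norms and using that for nonnegative entries the norm is monotone (if $0\leq u_i\leq v_i$ for all $i$ then $\norm{\<u>}\leq\norm{\<v>}$), I get
\begin{equation}
\norm{\<relu>(\<y>) - \<relu>(\<y>_0)} \leq \norm{\<R>\abs{\<y>-\<y>_0}} = \norm{\<R>(\<y>-\<y>_0)},
\end{equation}
where the last equality holds because $\<R>$ is diagonal so $\norm{\<R>\abs{\<v>}} = \norm{\abs{\<R>\<v>}} = \norm{\<R>\<v>}$. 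Substituting $\<y>-\<y>_0 = \<A>\<D>\Delta\<x>$ gives $\norm{\<z>-\<z>_0} \leq \norm{\<R>\<A>\<D>\Delta\<x>} \leq \norm{\<R>\<A>\<D>}\norm{\Delta\<x>}$.

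Finally I would convert this into the local Lipschitz bound. Since $\norm{\<x>-\<x>_0} = \norm{\<D>\Delta\<x>}$ and in general $\norm{\<D>\Delta\<x>}\leq\norm{\Delta\<x>}$, I cannot directly divide; instead I would argue that we may assume $\Delta\<x>$ lies in the range of $\<D>$ (i.e. has zero entries wherever $d_i=0$), because the input set $\mathcal{X}$ only contains such perturbations — more precisely, every $\<x>\in\mathcal{X}$ has the form $\<x>_0 + \<D>\Delta\<x>$, and replacing $\Delta\<x>$ by $\<D>\Delta\<x>$ changes neither $\<x>$ (since $\<D>^2=\<D>$) nor the bound $\norm{\Delta\<x>}$ decreases, so without loss of generality $\<D>\Delta\<x> = \Delta\<x>$ and hence $\norm{\<x>-\<x>_0} = \norm{\Delta\<x>}$. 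Then $\norm{\<z>-\<z>_0}\leq\norm{\<R>\<A>\<D>}\norm{\<x>-\<x>_0}$ for all $\<x>\in\mathcal{X}$, which by the definition \eqref{eq:local_lipschitz_constant_original} gives $L(\<x>_0,\mathcal{X})\leq\norm{\<R>\<A>\<D>}$. I would also dispose of the degenerate case $\mathcal{X}=\{\<x>_0\}$ separately, where $L=0$ and the bound holds trivially.

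The main obstacle I anticipate is the bookkeeping around the domain-restriction matrix $\<D>$: making sure the reduction "$\Delta\<x>$ without loss of generality has support in the nonzero coordinates of $\<D>$" is stated cleanly, and that the norm equality $\norm{\<x>-\<x>_0} = \norm{\Delta\<x>}$ is justified rather than an inequality in the wrong direction. The ReLU part is essentially immediate from Theorem \ref{thm:relu} once the coordinatewise reduction and the monotonicity of the norm under elementwise absolute values are in place.
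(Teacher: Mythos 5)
Your proposal is correct and follows essentially the same route as the paper's proof: apply Theorem \ref{thm:relu} coordinatewise to get the entrywise inequality $\abs{\relu(y_i)-\relu(y_{0,i})}\leq r_i\abs{y_i-y_{0,i}}$, stack it, use monotonicity of the norm with the nonnegative diagonal $\<R>$, and finish with the operator-norm bound on $\<R>\<A>\<D>$. The only cosmetic difference is the last normalization step — you restrict $\Delta\<x>$ without loss of generality to the range of $\<D>$ so that $\norm{\<x>-\<x>_0}=\norm{\Delta\<x>}$, whereas the paper inserts $\<D>=\<D>\<D>$ to write the ratio as $\norm{\<R>\<A>\<D>(\<D>\Delta\<x>)}/\norm{\<D>\Delta\<x>}$ — and these are equivalent.
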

\begin{proof}
We start by considering the output of the affine function on an elementwise basis. We can use $y_i$, $y_{0,i}$, $\mathcal{Y}_i$, and $\bar{y}_i$ in place of $y$, $y_0$, $\mathcal{Y}$, and $\bar{y}$ (respectively) in Theorem \ref{thm:relu}.
Applying the theorem, we can see that $r_i$ in \eqref{eq:ri_R} is the local Lipschitz constant from \eqref{eq:relu_local_lipschitz_constant}, so using \eqref{eq:local_lipschitz_constant_original} we have
\begin{equation}
r_i \abs{y_i - y_{0,i}} \geq \abs{\relu(y_i) - \relu(y_{0,i})} .
\label{eq:ri_inequality}
\end{equation}
Note that in \eqref{eq:overline_yi} we have not assumed that $y_i \neq y_{0,i}$. We do not have to make this assumption because using the expression for $y_i$ from \eqref{eq:y_yi_Y} we can see that since $\norm{\Delta \<x>} \leq \epsilon$, then $\mathcal{Y}_i$ will always be an interval centered at $y_{0,i}$, so the only way the maximum $y_i$ will be $y_{0,i}$ is when $\mathcal{Y}_i = \{ y_{0,i} \}$, in which case $r_i$ is computed without $\bar{y}_i$.

Taking \eqref{eq:ri_inequality} and stacking it into an elementwise vector equation for all $i$ yields
\begin{equation}
\abs{\<relu>(\<A> \<x> + \<b>) - \<relu>(\<A> \<x>_0 + \<b>)} \leq \<R> \abs{\<A> \<x> + \<b> - (\<A> \<x>_0 + \<b>)} .
\end{equation}
Noting that the inequality above holds elementwise, and that $\<R>$ is a diagonal matrix with non-negative entries, we have
\begin{equation}
\norm{\<relu>(\<A> \<x> + \<b>) - \<relu>(\<A> \<x>_0 + \<b>)} \leq \norm{\<R> (\<A> \<x> + \<b> - (\<A> \<x>_0 + \<b>))} .
\end{equation}

Next, we substitute the equation above into the definition of the local Lipschitz constant from \eqref{eq:local_lipschitz_constant} as follows:
\begin{align}
L \left( \<x>_0, \mathcal{X} \right)
&= \sup_{\<x> \in \mathcal{X}} \frac{\norm{\<relu>(\<A> \<x> + \<b>) - \<relu>(\<A> \<x>_0 + \<b>)}}{\norm{\<x> - \<x>_0}} \\
&\leq \sup_{\<x> \in \mathcal{X}} \frac{\norm{\<R> (\<A> \<x> + \<b> - (\<A> \<x>_0 + \<b>))}}{\norm{\<x> - \<x>_0}} \\ 
&= \sup_{\<x> \in \mathcal{X}} \frac{\norm{\<R> (\<A> \<x> - \<A> \<x>_0)}}{\norm{\<x> - \<x>_0}} \\ 
&= \max_{\norm{\Delta \<x>} \leq \epsilon} \frac{\norm{\<R> \<A> \<D> \Delta \<x>}}{\norm{\<D> \Delta \<x>}} \\ 
&= \max_{\norm{\Delta \<x>} \leq \epsilon} \frac{\norm{\<R> \<A> \<D> \<D> \Delta \<x>}}{\norm{\<D> \Delta \<x>}} \\ 
&\leq \max_{\norm{\Delta \<x>} \leq \epsilon} \frac{\norm{\<R> \<A> \<D>} \norm{\<D> \Delta \<x>}}{\norm{\<D> \Delta \<x>}} \\ 
&= \norm{\<R> \<A> \<D>} .
\end{align}
In the derivation above, we have used \eqref{eq:x_X} and the fact that $\<D> = \<D> \<D>$ since $\<D>$ is a diagonal binary matrix
\end{proof}

Note that since $\<R>$ and $\<D>$ are both diagonal binary matrices, it follows from Theorem \ref{thm:aff_relu} that $L( \<x>_0, \mathcal{X} ) \leq \norm{\<A>}$. This result provides a global Lipschitz bound that was mentioned as early as \cite{Szegedy}.
In this paper, we will often refer to network-wide bounds found using $\norm{\<A>}$ as the ``global'' bound.

\section{Lipschitz constants of max pooling functions} \label{sec:max_pooling_lipschitz}

\subsection{Max pooling functions}

Max pooling is ubiquitous in neural networks, so in order to compute the local Lipschitz constant of a full network, we need to compute the local or global Lipschitz constant of the max pooling function. 

\begin{figure}[ht]
\centering
\begin{tikzpicture}[every text node part/.style={align=center}]
\fill[yellow!15!white] (-4.4,-1.9) rectangle (4.4,1.9);
\node[anchor=center] (input) at (0,0)
	{\resizebox{.48\textwidth}{!}{
		$
	\<M>(\<x>) \<x> =
	\begin{bmatrix}
	{\color{red} 0} & {\color{red} 0} & {\color{red} 1} & 0 & 0 & 0 & 0 \\
	0 & 0 & {\color{red} 1} & {\color{red} 0} & {\color{red} 0} & 0 & 0 \\
	0 & 0 & 0 & 0 & {\color{red} 1} & {\color{red} 0} & {\color{red} 0}
	\end{bmatrix}
	\begin{bmatrix}
	1 \\ 1 \\ 4 \\ 2 \\ 3 \\ 2 \\ 1
	\end{bmatrix}
	=
	\begin{bmatrix}
	4 \\ 4 \\ 3
	\end{bmatrix}
	$}
	};

\draw [blue,thick,|-|] (-2.22,1) -- (-0.5,1);
\draw [blue,thick,|-|] (-0.52,1) -- (0.75,1);
\node[blue] at (-1.4,1.5) {\normalsize size\\[-2em] \normalsize kernel};
\node[blue] at (0.1,1.5) {\normalsize size\\[-2em] \normalsize stride};

\end{tikzpicture}


\caption{A 1D max pooling operation with kernel size $k=3$ and stride size $s=2$, shown in matrix form. Note that max pooling can be mathematically described as a piecewise linear matrix operation. Each row in $\<M>(\<x>)$ corresponds to one pooling region (and one output element), and each column corresponds to one input element. In this diagram, the red elements represent the elements in each pooling region.
}
\label{fig:maxpool_matrix}
\end{figure}

Max pooling is a downsampling operation which involves sliding a small window, which we will call a ``kernel'', across an input array, and taking the maximum value within the kernel every time the kernel is in place. The amount that the kernel is shifted in each direction is called the ``stride''. For each placement of the kernel, the elements of the input array within the kernel are called the ``pooling region''. If the stride size is less than the kernel size in any dimension, then some input elements will be part of more than one pooling region, and the max pooling function is called ``overlapping''. Otherwise, each input element will appear in a maximum of one pooling region, and the function is called ``non-overlapping''.

Although max pooling functions typically operate on multi-dimensional arrays, as we did with affine functions in Section \ref{sec:affine_relu_functions}, without loss of generality we can consider the input and output of the max pooling function to be vectors. Max pooling is piecewise linear operation and can be written as $\<f>(\<x>) = \<M>(\<x>) \<x>$ where $\<M>(\<x>) \in \mathbb{R}^{m \times n}$ is a binary matrix (see Fig. \ref{fig:maxpool_matrix}).

\subsection{Lipschitz constants of piecewise linear functions}

Computing the Lipschitz constant of a max pooling function requires the following result for the Lipschitz constant of a vector-valued continuous piecewise linear function.

\begin{figure}[ht]
\centering
\includegraphics[width=.45\textwidth]{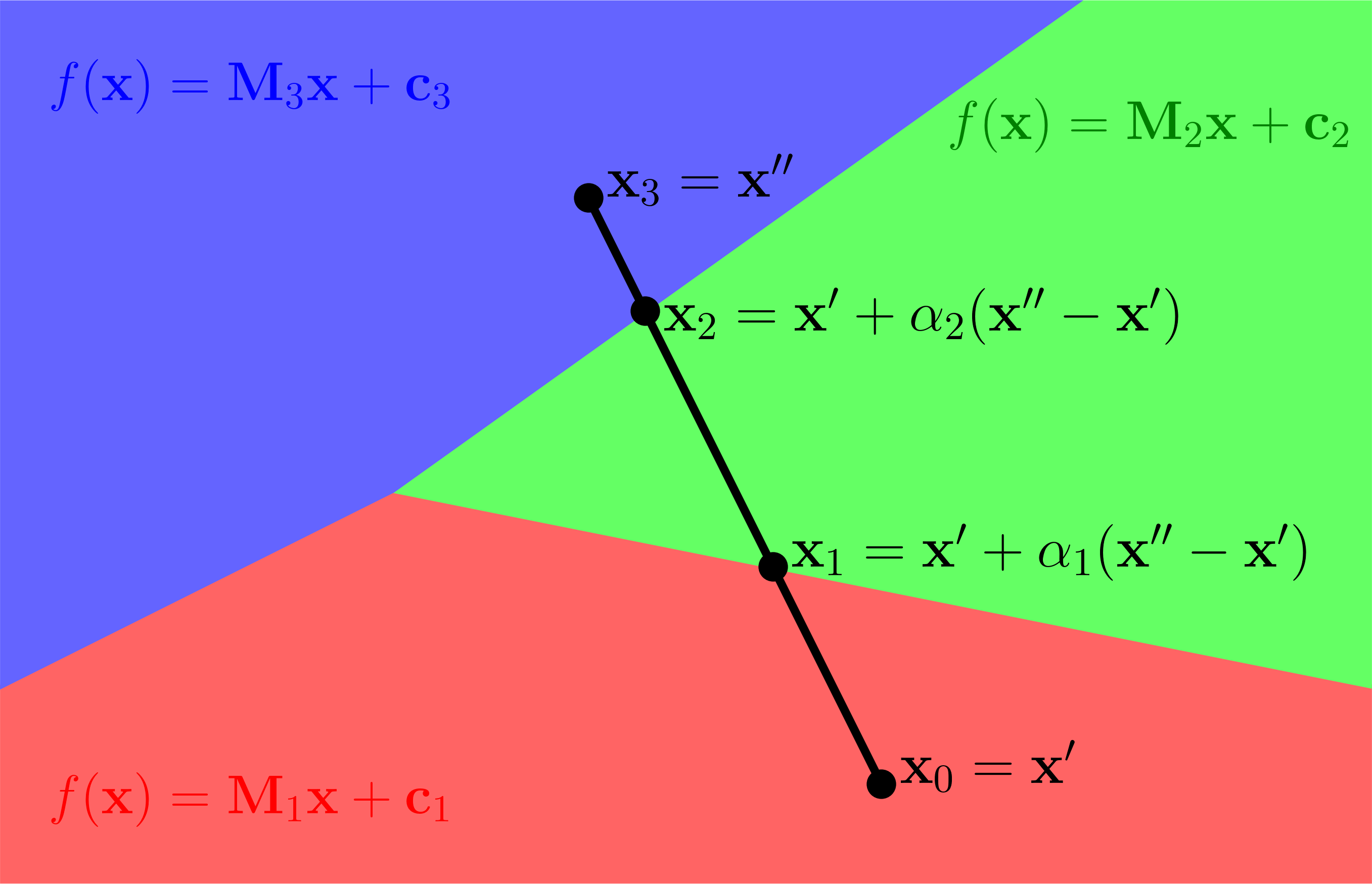}
\caption{Diagram of the domain of a piecewise linear function in 2D. The different colors represent different linear regions, each of which has a constant scaling matrix, $\<M>$ and bias vector, $\<c>$. The points $\<x>'$ and $\<x>''$ are the start and end points of the line segment, and $\<x>_1$ and $\<x>_2$ are the internal points on the line segment for which the linear regions change.}
\label{fig:piecewise}
\end{figure}

\begin{lem} \label{lem:piecewise_linear_lipschitz}
Consider a vector-valued, continuous piecewise linear function $\<f>(\<x>) = \<M>(\<x>) \<x> + \<c>(\<x>)$ where $\<M>(\<x>) \in \mathbb{R}^{m \times n}$ and $\<c>(\<x>) \in \mathbb{R}^m$. The global Lipschitz constant of $\<f>$ is the maximum norm of all matrices $\<M>(\<x>)$:
\begin{equation}
L = \max_{\<x>} \norm{\<M>(\<x>)} . 
\end{equation}
\end{lem}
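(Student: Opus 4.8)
The plan is to establish the Lipschitz constant of a continuous piecewise linear function $\<f>$ by proving two inequalities: that $L \leq \max_{\<x>} \norm{\<M>(\<x>)}$, and that $L \geq \max_{\<x>} \norm{\<M>(\<x>)}$. The second inequality is the easy direction: fix any $\<x>^*$ achieving the maximum norm, and on the linear region containing $\<x>^*$ the function is genuinely affine with matrix $\<M>(\<x>^*)$, so along a short segment inside that region the local behavior forces the global Lipschitz constant to be at least $\norm{\<M>(\<x>^*)}$. (One must note that a linear region is typically a full-dimensional polyhedron, so there is room to move inside it; degenerate cases can be handled by a limiting argument or simply subsumed since they do not achieve a strictly larger norm.)

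The first inequality is the substantive part. Given any two points $\<x>'$ and $\<x>''$, consider the line segment joining them. Because $\<f>$ is continuous piecewise linear, this segment passes through finitely many linear regions, crossing boundaries at a finite increasing sequence of points $\<x>' = \<x>_0, \<x>_1, \ldots, \<x>_k = \<x>''$ (see Fig. \ref{fig:piecewise}). On each subsegment $[\<x>_{j}, \<x>_{j+1}]$ the function is affine with some constant matrix $\<M>_j$, so $\norm{\<f>(\<x>_{j+1}) - \<f>(\<x>_j)} = \norm{\<M>_j (\<x>_{j+1} - \<x>_j)} \leq \norm{\<M>_j}\, \norm{\<x>_{j+1} - \<x>_j} \leq \left( \max_{\<x>} \norm{\<M>(\<x>)} \right) \norm{\<x>_{j+1} - \<x>_j}$. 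Summing over $j$ and using the triangle inequality on the left gives $\norm{\<f>(\<x>'') - \<f>(\<x>')} \leq \left( \max_{\<x>} \norm{\<M>(\<x>)} \right) \sum_j \norm{\<x>_{j+1} - \<x>_j}$. Since all the $\<x>_j$ lie on a single line segment in order, the sum of the lengths of the pieces equals the length of the whole: $\sum_j \norm{\<x>_{j+1} - \<x>_j} = \norm{\<x>'' - \<x>'}$. This yields the desired bound, and taking the supremum over $\<x>' \neq \<x>''$ in \eqref{eq:global_lipschitz_constant} gives $L \leq \max_{\<x>} \norm{\<M>(\<x>)}$.

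The main obstacle is making the "finitely many linear regions along a segment" claim rigorous: one needs that a continuous piecewise linear function is defined by a finite polyhedral subdivision of $\mathbb{R}^n$, so that a generic line segment meets only finitely many cells and crosses their boundaries at finitely many points. For max pooling this is concrete — the regions are determined by which entry of each pooling region attains the maximum, a finite set of linear inequality systems — but stating it cleanly in the general lemma requires either invoking the standard structure theorem for continuous piecewise linear functions or restricting attention to the finitely-many-region case explicitly. A secondary subtlety is that $\<M>(\<x>)$ is multivalued on boundaries; this is harmless because continuity of $\<f>$ forces the affine pieces to agree on shared boundaries, so the value $\<f>(\<x>_{j+1}) - \<f>(\<x>_j)$ is unambiguous regardless of which region's matrix we assign to the shared endpoint. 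I would also remark that $\max_{\<x>} \norm{\<M>(\<x>)}$ is in fact a maximum over a finite set (the matrices are constant on each of finitely many regions), so the max is attained, which is what justifies the achievability argument in the easy direction.
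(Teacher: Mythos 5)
Your proof is correct and takes essentially the same route as the paper's: the identical segment decomposition across linear regions with the triangle inequality (the paper phrases the telescoping sum via weights $\Delta\alpha_i$ summing to one, you phrase it via subsegment lengths summing to the total length, which is the same thing) for the upper bound, and the same achievability argument inside the region attaining $\max_{\<x>}\norm{\<M>(\<x>)}$ for the lower bound. Your added remarks on the finiteness of regions met by a segment, the well-definedness of $\<f>$ on region boundaries, and the attainment of the maximum are points the paper leaves implicit, but they refine rather than change the argument.
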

\begin{proof}
Let $\<x>' \in \mathbb{R}^n$ and $\<x>'' \in \mathbb{R}^n$ denote two arbitrary points. Using Fig. \ref{fig:piecewise} as a guide, a line segment from $\<x>'$ to $\<x>''$ will travel through some number $p$ different linear regions, which we will reference using indices $i=1,...,p$.
Denote the associated scaling matrix and bias of linear region $i$ as $\<M>_i$ and $\<c>_i$ respectively. Define $\Delta \<x> \coloneqq \<x>'' - \<x>'$ and denote the points on the boundary of the linear regions as $\<x>_i \coloneqq \<x>' + \alpha_i \Delta \<x>$ where $0 \leq \alpha_i \leq 1$. Let $\alpha_0 = 0$ and $\alpha_p = 1$ so that $\<x>_0 = \<x>'$ and $\<x>_p = \<x>''$, and also define $\Delta \alpha_i \coloneqq \alpha_i - \alpha_{i-1}$ for $i=1,..,p$. Note that $\sum_{i=1}^p \Delta \alpha_i = 1$. The difference in the function $\<f>$ across the $i^{th}$ linear region of the line segment can then be written as 
\begin{align}
\<f>(\<x>_i) - \<f>(\<x>_{i-1}) &= \<M>_i \<x>_i + \<c>_i - (\<M>_i \<x>_{i-1} + \<c>_i) \\
&= \<M>_i (\<x>_i - \<x>_{i-1}) \\
&= \Delta \alpha_i \<M>_i \Delta \<x> .
\end{align}
We can write $\<f>(\<x>'') - \<f>(\<x>')$ as the sum of differences in $\<f>$ across each linear region. Noting that $\<f>(\<x>_0) = \<f>(\<x>')$ and $\<f>(\<x>_p) = \<f>(\<x>'')$ we have 
\begin{align}
\<f>(\<x>'') - \<f>(\<x>') &= \sum_{i=1}^p \<f>(\<x>_i) - \<f>(\<x>_{i-1}) \\
&= \sum_{i=1}^p \Delta \alpha_i \<M>_i \Delta \<x> .
\end{align}
Next, we plug the equation above into the definition of the Lipschitz constant in \eqref{eq:global_lipschitz_constant}. Note that in \eqref{eq:global_lipschitz_constant}, the points $\<x>_1$ and $\<x>_2$ denote any points in $\mathbb{R}^n$, but in this theorem we are using $\<x>'$ and $\<x>''$ to denote any points in $\mathbb{R}^n$, and $\<x>_1$ and $\<x>_2$ to denote points on the line segment between $\<x>'$ and $\<x>''$:
\begin{align}
L &= \sup_{\<x>' \neq \<x>''} \frac{\norm{\<f>(\<x>'') - \<f>(\<x>')}}{\norm{\<x>'' - \<x>'}} \\
&= \sup_{\<x>' \neq \<x>''} \frac{\norm{\sum_{i=1}^p \Delta \alpha_i \<M>_i \Delta \<x>}}{\norm{\Delta \<x>}} \\
&\leq \sup_{\<x>' \neq \<x>''} \frac{\sum_{i=1}^p \norm{\Delta \alpha_i \<M>_i \Delta \<x>}}{\norm{\Delta \<x>}} \\
&\leq \sup_{\<x>' \neq \<x>''} \frac{\sum_{i=1}^p \Delta \alpha_i \norm{\<M>_i} \norm{\Delta \<x>}}{\norm{\Delta \<x>}} \\
&= \sup_{\<x>' \neq \<x>''} \sum_{i=1}^p \Delta \alpha_i \norm{\<M>_i} \\
&\leq \max_{\<x>} \norm{\<M>(\<x>)} .
\end{align}
In the last step we used the fact that $\sum_{i=1}^p \Delta \alpha_i = 1$.

We have shown that $L \leq \max_{\<x>} \norm{\<M>(\<x>)}$ so we can complete the proof by showing that $L \geq \max_{\<x>} \norm{\<M>(\<x>)}$. Let $\<M>^*$ denote the $\<M>$ matrix of the linear region associated with $\max_{\<x>} \norm{\<M>(\<x>)}$, i.e., $\max_{\<x>} \norm{\<M>(\<x>)} = \norm{\<M>^*}$. Let $\<x>'^*$ and $\<x>''^*$ denote any two points in the linear region of the $\<M>^*$ such that $\norm{\<M>^*} = \norm{\<M>(\<x>''^* - \<x>'^*)}/\norm{\<x>''^* - \<x>'^*}$.
We have
\begin{align}
L &= \sup_{\<x>' \neq \<x>''} \frac{\norm{\<f>(\<x>'') - \<f>(\<x>')}}{\norm{\<x>'' - \<x>'}} \\
&\geq \frac{\norm{\<f>(\<x>''^*) - \<f>(\<x>'^*)}}{\norm{\<x>''^* - \<x>'^*}} \\
&= \frac{\norm{\<M>^*(\<x>''^* - \<x>'^*)}}{\norm{\<x>''^* - \<x>'^*}} \\
&= \norm{\<M>^*} \\
&= \max_{\<x>} \norm{\<M>(\<x>)} .
\end{align}
We have shown that $L \leq \max_{\<x>} \norm{\<M>(\<x>)}$ and $L \geq \max_{\<x>} \norm{\<M>(\<x>)}$ so $L = \max_{\<x>} \norm{\<M>(\<x>)}$.
\end{proof}
We will use this lemma to determine the Lipschitz constant of a max pooling function. Note that ReLUs are also piecewise linear, so we could use this result to determine the global Lipschitz constant of the ReLU (which equals one).

\subsection{Lipschitz constants of max pooling functions}

If a max pooling function is non-overlapping, then it will have a Lipschitz constant of one. However, if it is overlapping, the Lipschitz constant will be larger due to the fact that an input element can map to multiple places in the output (note that this fact is sometimes overlooked in the literature). We present the global Lipschitz constant of a general overlapping or non-overlapping max pooling function in the following theorem.

\begin{thm} \label{thm:max_pool}
Consider a max pooling function. Let $n_{\text{max}}$ denote the maximum number of pooling regions that any input element is part of.
The global Lipschitz constant of the max pooling function is:
\begin{equation}
L =
\sqrt{n_{\text{max}}} .
\label{eq:max_pooling_lipschitz}
\end{equation}
\end{thm}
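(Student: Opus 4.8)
The plan is to invoke Lemma~\ref{lem:piecewise_linear_lipschitz}: since max pooling is $\<f>(\<x>) = \<M>(\<x>) \<x>$ with no bias term, its global Lipschitz constant equals $\max_{\<x>} \norm{\<M>(\<x>)}$, so the whole problem reduces to bounding, and then attaining, the spectral norm of the binary selection matrices $\<M>(\<x>)$. The key structural observation is that each row of $\<M>(\<x>)$ contains exactly one nonzero entry, equal to $1$, in the column corresponding to the input index that attains the maximum in that output's pooling region. Consequently $\<M>(\<x>)^T \<M>(\<x>)$ is diagonal, and its $j$th diagonal entry is the $j$th column sum of $\<M>(\<x>)$, i.e., the number of output elements that selected input $j$. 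Hence $\norm{\<M>(\<x>)}^2$ equals the largest column sum of $\<M>(\<x>)$.

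First I would prove the upper bound $L \leq \sqrt{n_{\text{max}}}$. The $j$th column sum of $\<M>(\<x>)$ counts the output pooling regions in which input $j$ is selected as the maximum, which is at most the number of pooling regions that contain input $j$, and that number is at most $n_{\text{max}}$ by definition. Therefore $\norm{\<M>(\<x>)}^2 \leq n_{\text{max}}$ for every $\<x>$, and the lemma gives $L \leq \sqrt{n_{\text{max}}}$.

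Next I would show the bound is attained, so that $L \geq \sqrt{n_{\text{max}}}$. Let $j^\star$ be an input index belonging to exactly $n_{\text{max}}$ pooling regions, and choose a vector $\<x>^\star$ in which $x^\star_{j^\star}$ is strictly larger than every other entry, perturbing the remaining entries so that they are mutually distinct and every pooling region has a unique maximizer, which places $\<x>^\star$ in the interior of a linear region. Then $j^\star$ is the maximizer of each of its $n_{\text{max}}$ pooling regions, so the $j^\star$th column sum of $\<M>(\<x>^\star)$ equals $n_{\text{max}}$, whence $\norm{\<M>(\<x>^\star)} = \sqrt{n_{\text{max}}}$. Combining this with the upper bound yields $L = \sqrt{n_{\text{max}}}$.

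The main obstacle, and it is minor, is bookkeeping around the definition of $\<M>(\<x>)$ at inputs with tied maxima, where the selection is not single-valued; this is precisely why the attainment argument is stated with a tie-free $\<x>^\star$ lying in the interior of a linear region. Everything else is routine linear algebra, the only genuinely useful idea being the reduction of $\norm{\<M>(\<x>)}$ to a column-sum count via the diagonal structure of $\<M>(\<x>)^T \<M>(\<x>)$.
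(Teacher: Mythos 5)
Your proof is correct and takes essentially the same route as the paper: reduce to $\max_{\<x>}\norm{\<M>(\<x>)}$ via Lemma~\ref{lem:piecewise_linear_lipschitz}, use the single-one-per-row structure to see that $\<M>(\<x>)^T\<M>(\<x>)$ is diagonal with entries equal to column sums, and identify the largest achievable column sum with $n_{\text{max}}$. Your explicit tie-free construction of $\<x>^\star$ for the attainment step is in fact slightly more careful than the paper's one-line assertion that any input can be the maximum for all of its pooling regions, but the substance is identical.
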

\begin{proof}
Without loss of generality, we can consider the input and output of the max pooling function to be vectors in $\mathbb{R}^n$ and $\mathbb{R}^m$, respectively, so we can represent max pooling as a matrix operation. The max pooling function is piecewise linear and can be written as $\<f>(\<x>) = \<M>(\<x>) \<x>$ where $\<M>(\<x>) \in \mathbb{R}^{m \times n}$ is a binary matrix (see Fig. \ref{fig:maxpool_matrix}).

Let $\<m>_j(\<x>) \in \mathbb{R}^m$ denote the $j^{th}$ column of $\<M>$ and let $m_{ij}(\<x>) \in \mathbb{R}$ denote the $(i,j)^{th}$ entry of $\<M>(\<x>)$. Dropping the explicit dependence on $\<x>$, we have $\<M> = [\<m>_1 ~~ \cdots ~~ \<m>_n]$. Each row of $\<M>$ represents one pooling region, and will only have a single 1, with all other values being zero. Each column of $\<M>$ represents a particular input, and the number of occurrences of the value 1 in any column represents the number of pooling regions that input is the maximum for. Since any input can be the maximum for all of its pooling regions, the maximum possible number occurrences of the value 1 in any column of all matrices $\<M>(\<x>)$ can be expressed as
\begin{align}
\begin{aligned}
n_{\text{max}}
&= \max_{\<x>} \left( \max_j \<m>_j^T \<m>_j \right) .
\end{aligned}
\label{eq:n_max_M}
\end{align}




Since each row contains a single 1, the columns $\<m>_j$ are orthogonal
so $\<M>^T \<M> = \<diag>(\<m>_1^T \<m>_1, \cdots , \<m>_n^T \<m>_n)$. Since $\<M>^T \<M>$ is diagonal, its singular values are $\<m>_j^T \<m>_j$ which implies $\norm{\<M>^T \<M>}{=}\max_j \<m>_j^T \<m>_j$ and $\norm{\<M>}{=}\max_j \sqrt{\<m>_j^T \<m>_j}$.
Using \eqref{eq:n_max_M} and Lemma \ref{lem:piecewise_linear_lipschitz}, we have $L = \max_{\<x>} \norm{\<M>(\<x>)} = \sqrt{n_{\text{max}}}$.


\end{proof}

\begin{figure}[ht]
\centering
\includegraphics[width=.30\textwidth]{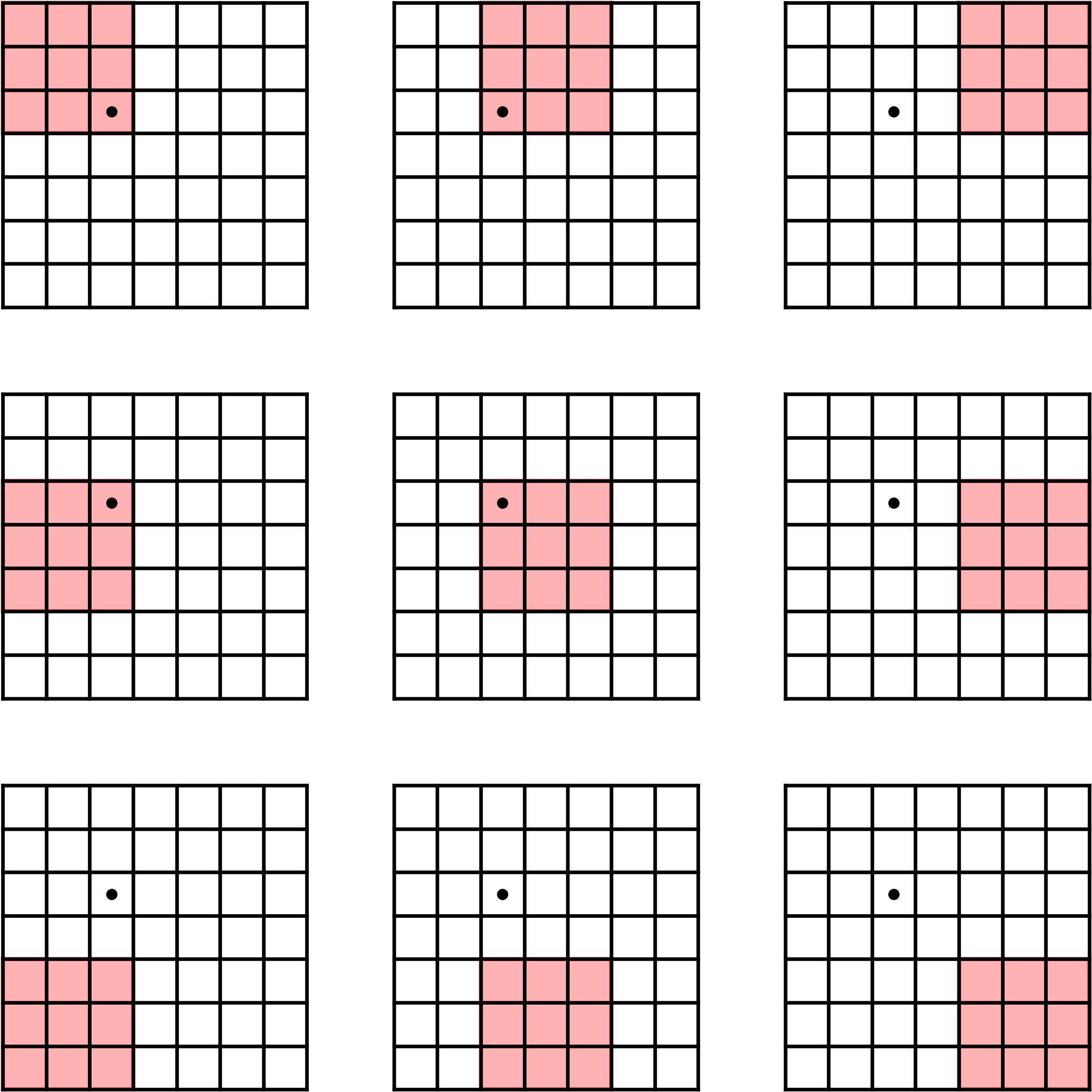}
\caption{Illustration of a 2D max pooling function. Each of the nine grids represent the same input array, but with the kernel (red square) placed in a different location. The dot represents one particular input, which is covered the maximum number of times by the kernel. In this example, the kernel size is $k=3$, the stride size is $s=2$, and the number of strides required to move the kernel to a completely new set of inputs is $c = \mathrm{ceil}(k/s) = 2$ (see Proposition \ref{prp:n_max}). The maximum number of pooling regions that any input can be a part of is $n_{\text{max}} = c^2 = 4$.}
\label{fig:maxpool_grid}
\end{figure}

Next, we show how the value $n_{\text{max}}$ can be computed.
\begin{prp} \label{prp:n_max}
Consider a 2D max pooling function with a dilation of one, and with kernel size $k$ and stride size $s$ in each dimension. Let $n_{\text{max}}$ denote the maximum number of pooling regions that any input can be part of. The value of $n_{\text{max}}$ is given by the following equation:
\begin{equation}
n_{\text{max}} = \mathrm{ceil}(k/s)^2 .
\label{eq:n_max_k_s}
\end{equation}
\end{prp}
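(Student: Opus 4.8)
The plan is to factor the 2D counting problem into two independent 1D problems and then solve the 1D problem by a short overlap argument. Because the dilation is one, each 2D pooling region is the Cartesian product of a contiguous length-$k$ window in the first dimension with a contiguous length-$k$ window in the second dimension, and consecutive windows in each dimension are spaced $s$ apart. Hence an input at grid location $(p_1,p_2)$ belongs to the 2D pooling region indexed by $(j_1,j_2)$ if and only if $p_1$ belongs to the $j_1$-th 1D window and $p_2$ belongs to the $j_2$-th 1D window. Consequently the number of 2D regions containing $(p_1,p_2)$ equals the product of the number of 1D windows containing $p_1$ and the number containing $p_2$; since both dimensions use the same $k$ and $s$, the maximum over $(p_1,p_2)$ is the square of the 1D maximum, so it suffices to show the 1D maximum equals $\mathrm{ceil}(k/s)$.

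For the 1D problem, index inputs by integers and let the $j$-th window be $\{js, js+1, \dots, js+k-1\}$. \emph{Upper bound:} if an input $p$ lies in windows $j$ and $j'$ with $j<j'$, then $j's \le p \le js + k - 1$, so $(j'-j)s \le k-1$ and therefore $j'-j < k/s$, i.e.\ $j'-j \le \mathrm{ceil}(k/s)-1$; moreover one checks that $p$ then lies in every intervening window, so the set of window indices containing $p$ is an interval of at most $\mathrm{ceil}(k/s)$ consecutive integers. \emph{Lower bound:} take an input $p = Ns$ with $N$ large enough that $p$ and window $N$ are well inside the array; the windows containing $p$ are exactly those with $N - \lfloor(k-1)/s\rfloor \le j \le N$, of which there are $\lfloor(k-1)/s\rfloor+1 = \mathrm{ceil}(k/s)$, using the elementary identity $\lfloor (k-1)/s \rfloor + 1 = \mathrm{ceil}(k/s)$ valid for all positive integers $k,s$. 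Combining the two bounds gives $n_{\text{max}}^{\text{1D}} = \mathrm{ceil}(k/s)$, and squaring yields \eqref{eq:n_max_k_s}.

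The main obstacle, such as it is, is being careful about the array boundary: the count $\mathrm{ceil}(k/s)$ is attained only for inputs sufficiently in the interior (an edge input lies in strictly fewer windows, since negative window indices do not exist), so the statement implicitly assumes the input array is large enough relative to $k$, and the witness used in the lower bound must be chosen accordingly. Once that is acknowledged, the Cartesian-product factorization and the floor/ceiling bookkeeping are routine.
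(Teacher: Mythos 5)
Your proof is correct and follows essentially the same route as the paper's: factor the count into the two dimensions, show the 1D maximum coverage is $\mathrm{ceil}(k/s)$, and square. Your execution of the 1D step is somewhat more rigorous than the paper's (which simply asserts $cs \geq k$ plus integrality), since you supply both an explicit upper bound and an interior witness attaining it, and you correctly flag the implicit assumption that the array is large enough for that witness to exist.
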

\begin{proof}
We start by considering this problem in each of the two dimensions independently. Let $c$ denote the maximum number of times any input can be covered by different placements of the kernel, which also corresponds to the number of the strides required to move the kernel to an entirely different set of inputs (see Fig. \ref{fig:maxpool_grid}). The values $k$, $s$, and $c$ are related by the equation $c s \geq k$. Since $c$ must be an integer, it can be determined with the equation $c = \text{ceil}(k/s)$.


Note that $c$ represents the maximum number of kernel placements in each dimension that can cover a particular input. Since the kernel moves in strides along a 2D grid, $c^2 = \text{ceil}(k/s)^2$ represents the maximum number of kernel placements over both dimensions that can cover a particular input, which is equivalent to $n_{\text{max}}$.

\end{proof}
Note that this result can easily be generalized to the case in which the max pooling function has different kernel sizes and/or stride sizes in each dimension.


\section{Network-wide bounds} \label{sec:network_wide_bounds}

\subsection{Summary of Lipschitz constants and bounds}

We have derived Lipschitz constants and Lipschitz bounds for several functions, most of which describes a single layer of a network. These bounds are summarized in Table \ref{tab:lipschitz_summary}. In this section we will describe how to combine these bounds to determine a network-wide bound.

\newcommand{\specialcell}[2][c]{%
\begin{tabular}[#1]{@{}c@{}}#2\end{tabular}}

\renewcommand{\arraystretch}{1.2}
\begin{table}[ht]
\setlength{\tabcolsep}{4pt}
\centering
\begin{tabular}{ c c c c }
\toprule
\textbf{Function} & \textbf{Global/Local} & \multicolumn{1}{c}{\specialcell{\textbf{Exact/} \\ \textbf{Upper Bound}}} & \textbf{Value} \\
\midrule
affine & global & exact & $\norm{\<A>}$ \\[0pt]
ReLU & global & exact & $1$ \\[0pt]
ReLU & local & exact & $\frac{\relu(\bar{y}) - \relu(y_0)}{\bar{y} - y_0}$ \\[0pt]
affine-ReLU & global & exact & $\norm{\<A>}$ \\
affine-ReLU & local & upper bound & $\norm{\<R> \<A> \<D>}$ \\
max pooling & global & exact & $\sqrt{n_{\text{max}}}$ \\
\bottomrule
\end{tabular}
\vspace{9pt}
\caption{Summary of the Lipschitz constants and bounds derived and discussed in this paper, whether they are global or local measures, and whether they are exact Lipschitz constants or upper bounds. The equation for the local Lipschitz constant of ReLU functions assumes $\mathcal{Y} \neq \{ y_0 \}$.}
\label{tab:lipschitz_summary}
\end{table}
\renewcommand{\arraystretch}{1.0}
\setlength{\tabcolsep}{6pt}

\subsection{Determining the zero output indices of a layer} \label{sec:zero_output_elements_determination}

As we mentioned in Section \ref{sec:input_set}, we consider the input set $\mathcal{X}$ of each layer of a network to be a function of a bound $\epsilon$ on the inputs, and a domain-restriction matrix $\<D>$.
Given the current layer of a network, we now describe how to determine $\<D>$ for the following layer.

For affine-ReLU functions, our goal is to determine which entries of the output set $\mathcal{Z}$ are zero. The zero elements of the vectors $\<z> \in \mathcal{Z}$ can easily be determined from the upper bound vector $\overline{\<y>}$. Since $\overline{\<y>}$ is an upper bound on $\<y> \in \mathcal{Y}$, if $\bar{y}_i \leq 0$ then $y_i \leq 0$ for all $\<y> \in \mathcal{Y}$, which implies $z_i = \relu(y_i) = 0$ for all $\<z> \in \mathcal{Z}$.
Therefore, given an affine-ReLU function, we can form the domain-restriction matrix $\<D>^{\text{next}} \in \mathbb{R}^{m \times m}$ of the next layer as follows:
\begin{align}
\begin{aligned}
d^{\text{next}}_i &= 
\begin{cases}
0, &\bar{y}_i \leq 0 \\
1, &\bar{y}_i > 0
\end{cases} \\
\<D>^{\text{next}} &= \<diag>(d_1^{\text{next}}, \cdots, d_m^{\text{next}}) .
\end{aligned}
\label{eq:di_D_next_layer_aff_relu}
\end{align}

We can also determine the zero output indices of max pooling layers
by noting that if all inputs to a particular pooling region are known to be zero, then the output must be zero.
We can efficiently determine these indices by letting $\<d> \in \mathbb{R}^n$ denote the diagonal elements of the input domain-restriction matrix $\<D> \in \mathbb{R}^{n \times n}$. Each of these elements is a binary value indicating whether the $i^{th}$ element of all inputs equals zero. Therefore, if we plug this vector into the max pooling function,
then each output will equal zero if and only if all inputs in its pooling region are zero, and will equal one otherwise. Therefore, we can use these outputs to form the diagonal elements of the domain-restriction matrix for the next layer:
\begin{align}
\begin{aligned}
\<d>^{\text{next}} &= \<maxpool>(\<d>) \in \mathbb{R}^m \\
\<D>^{\text{next}} &= \<diag>(\<d>^{\text{next}}) \in \mathbb{R}^{m \times m} .
\label{eq:di_D_next_layer_max_pool}
\end{aligned}
\end{align}

Finally, for affine layers, we let the domain-restriction matrix equal the identity matrix:
\begin{align}
\<D>^{\text{next}} = \<I> .
\label{eq:D_next_affine}
\end{align}
Note that affine layers are usually the final layer of a network, so we usually do not have to use this equation.

\subsection{Network-wide bounds} \label{sec:multiple_layers}

\begin{algorithm}
\caption{Steps in our method to compute a bound on the local Lipschitz constant of a feedforward network}
\label{alg:steps}
\begin{algorithmic} 
\State initialize nominal input $\<x>_0$
\State initialize input perturbation size $\epsilon$
\State initialize domain-restriction matrix $\<D>$ as identity matrix
\State initialize network local Lipschitz bound: $L^{\text{net}} \leftarrow 1$
\For{each layer \textbf{in} network}
\If{layer \textbf{is} affine-ReLU}
\State determine $\<y>_0$ using \eqref{eq:y0_z0}: $\<y>_0 \leftarrow \<A> \<x>_0 + \<b>$
\State compute $\overline{\<y>}$ using \eqref{eq:ybar_i}
\State compute $\<R>$ using \eqref{eq:ri_R}
\State compute Lipschitz bound using \eqref{eq:affine_relu_Lipschitz_bound}: $L \leftarrow \norm{\<R> \<A> \<D>}$
\State set nominal input for next layer: $\<x>_0 \leftarrow \<relu>(\<A> \<x>_0 + \<b>)$
\ElsIf{layer \textbf{is} max pooling}
\State compute $n_{\text{max}}$ using \eqref{eq:n_max_k_s}
\State compute Lipschitz constant using \eqref{eq:max_pooling_lipschitz}: $L \leftarrow \sqrt{n_{\text{max}}}$
\State set nominal input for next layer: $\<x>_0 \leftarrow \<maxpool>(\<x>_0)$
\ElsIf{layer \textbf{is} affine}
\State compute Lipschitz constant: $L \leftarrow \norm{\<A>}$
\State set nominal input for next layer: $\<x>_0 \leftarrow \<A> \<x>_0 + \<b>$
\EndIf
\State compute $\<D>$ for next layer using \eqref{eq:di_D_next_layer_aff_relu}, \eqref{eq:di_D_next_layer_max_pool}, or \eqref{eq:D_next_affine}
\State compute $\epsilon$ for next layer: $\epsilon \leftarrow \epsilon L$ (see Proposition \ref{prp:perturbation})
\State update $L^\text{net}$: $L^\text{net} \leftarrow L^\text{net} L$ (see Proposition \ref{prp:composition})
\EndFor
\end{algorithmic}
\end{algorithm}


We now have all of the tools to compute a local Lipschitz bound on a feedforward neural network.
The steps are shown in Algorithm \ref{alg:steps}.
In summary, we start with a nominal input $\<x>_0$ and a bound $\epsilon$ on the set of inputs.
We iterate through each layer of the network, and calculate the Lipschitz constant or bound of the layer.
Then, we determine the nominal input $\<x>_0$, domain-restriction matrix $\<D>$, and input perturbation bound $\epsilon$ for the next layer.
We then update the network Lipschitz bound, and continue iterating through the layers of the network.


\subsection{Relationship between local Lipschitz constants and adversarial bounds} \label{sec:local_lipschitz_adversarial_bounds}

One useful application of local Lipschitz constants is that they can be used to bound adversarial examples. Since the local Lipschitz constant represents how much a network's output can change with respect to changes in the input, it can be used to determine a bound on input perturbations that can change the classification of a classification network. The following proposition describes how input perturbations can be related to adversarial bounds. 

\begin{prp} \label{prp:adversarial}
Consider a feedforward classification neural network $\<f>$ with nominal input $\<x>_0$ and input set $\mathcal{X}$. Assume the input set is norm bounded by $\epsilon$, i.e., $\norm{\<x> - \<x>_0} \leq \epsilon, ~ \forall \<x> \in \mathcal{X}$. Let $\delta \geq 0$ denote the difference between the largest and second-largest values of $\<f>(\<x>_0)$ (i.e., the top two classes). Any $\epsilon$ that satisfies the following equation is a lower bound on the minimum adversarial perturbation (i.e., such a perturbation cannot change the network classification):
\begin{align}
\epsilon L(\<x>_0, \mathcal{X}) < \frac{\delta}{\sqrt{2}} .
\label{eq:eps_L_delta}
\end{align}
\end{prp}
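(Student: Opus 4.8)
The plan is to show that under the hypothesis \eqref{eq:eps_L_delta}, the largest component of $\<f>(\<x>_0)$ remains strictly largest over all $\<x> \in \mathcal{X}$, so the predicted class cannot change. First I would fix notation: let $j^\star$ be the index of the largest component of $\<f>(\<x>_0)$ and $k^\star$ the index of the second-largest, so that $f_{j^\star}(\<x>_0) - f_{k^\star}(\<x>_0) = \delta$. For an arbitrary $\<x> \in \mathcal{X}$ and any index $i \neq j^\star$, I want to bound the change in the \emph{difference} $f_{j^\star}(\<x>) - f_i(\<x>)$ relative to its nominal value $f_{j^\star}(\<x>_0) - f_i(\<x>_0) \geq \delta$.

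The key step is a two-component projection argument. Consider the map $\<x> \mapsto f_{j^\star}(\<x>) - f_i(\<x>)$, or more precisely the vector-valued map $\<g>(\<x>) = (f_{j^\star}(\<x>), f_i(\<x>)) \in \mathbb{R}^2$. Since $\<g>$ is a coordinate projection of $\<f>$ composed with $\<f>$, its local Lipschitz constant is at most that of $\<f>$, namely $L(\<x>_0,\mathcal{X})$ (projection onto a subset of coordinates is $1$-Lipschitz in the $2$-norm, and compose with Proposition \ref{prp:composition}). Hence by Proposition \ref{prp:perturbation}, $\norm{\<g>(\<x>) - \<g>(\<x>_0)} \leq \epsilon L(\<x>_0,\mathcal{X})$ for all $\<x> \in \mathcal{X}$. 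Writing $\Delta_{j^\star} = f_{j^\star}(\<x>) - f_{j^\star}(\<x>_0)$ and $\Delta_i = f_i(\<x>) - f_i(\<x>_0)$, this gives $\Delta_{j^\star}^2 + \Delta_i^2 \leq \epsilon^2 L(\<x>_0,\mathcal{X})^2$. The change in the difference $f_{j^\star} - f_i$ is $\Delta_{j^\star} - \Delta_i$, and by Cauchy--Schwarz (or the AM--QM inequality), $\abs{\Delta_{j^\star} - \Delta_i} \leq \sqrt{2}\sqrt{\Delta_{j^\star}^2 + \Delta_i^2} \leq \sqrt{2}\,\epsilon L(\<x>_0,\mathcal{X})$. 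This is exactly where the $\sqrt{2}$ factor enters.

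Combining the pieces: for any $i \neq j^\star$,
\begin{align}
f_{j^\star}(\<x>) - f_i(\<x>) &\geq \big(f_{j^\star}(\<x>_0) - f_i(\<x>_0)\big) - \abs{\Delta_{j^\star} - \Delta_i} \\
&\geq \delta - \sqrt{2}\,\epsilon L(\<x>_0, \mathcal{X}) > \delta - \sqrt{2}\cdot\frac{\delta}{\sqrt{2}} = 0,
\end{align}
where the last strict inequality is the hypothesis \eqref{eq:eps_L_delta}. Thus $f_{j^\star}(\<x>) > f_i(\<x>)$ for every $i \neq j^\star$ and every $\<x> \in \mathcal{X}$, so the classification (the $\argmax$) is unchanged on $\mathcal{X}$, i.e., $\epsilon$ is a lower bound on the minimum adversarial perturbation.

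The main obstacle — really the only subtle point — is justifying the $\sqrt{2}$ cleanly: one must argue that the worst case is captured by the two-dimensional projection onto the top-two coordinates and that the Lipschitz bound on that projection is inherited from $\<f>$ (via Propositions \ref{prp:composition} and \ref{prp:perturbation}), and then that the difference of two numbers whose squared sum is bounded by $\rho^2$ is at most $\sqrt{2}\rho$ in absolute value, with equality when the two coordinates move by equal and opposite amounts. A minor bookkeeping point is handling the case $\delta = 0$ (the bound reads $\epsilon L < 0$, which is unsatisfiable, so the statement is vacuous) and noting that we only need to beat the runner-up class $k^\star$ since $f_i(\<x>_0) \leq f_{k^\star}(\<x>_0)$ for all other $i$, though carrying the argument for all $i \neq j^\star$ simultaneously as above avoids even needing that reduction.
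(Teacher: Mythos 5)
Your proof is correct and rests on the same two ingredients as the paper's own argument: Proposition \ref{prp:perturbation} to bound the output deviation by $\epsilon L(\<x>_0,\mathcal{X})$, and the observation that a 2-norm perturbation of the output smaller than $\delta/\sqrt{2}$ cannot swap the order of the top coordinate with any competitor. The only difference is bookkeeping: the paper obtains the $\sqrt{2}$ by explicitly minimizing $\norm{\<y>-\<y>_0}$ over the decision boundary (the minimum is attained when the two relevant coordinates meet halfway), whereas you obtain the same constant by Cauchy--Schwarz applied to the margin $f_{j^\star}-f_i$ — these are equivalent formulations of the same geometric fact.
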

\begin{proof}
Let $\<y> = \<f>(\<x>)$ and $\<y>_0 = \<f>(\<x>_0)$, and let $y_i$ and $y_{0,i}$ denote the $i^{th}$ entries of $\<y>$ and $\<y>_0$, respectively.
Let $a$ and $b$ denote the indices $i$ of the largest and second-largest elements of $\<y>_0$, respectively (i.e., the indices of the first and second classes of the nominal input).
The top-1 classification will change when $y_i$ is greater than or equal to $y_a$ for some $i \neq a$.
Therefore, the smallest value of $\norm{\<y> - \<y>_0}$ for which the classification can change can be derived as follows
\begin{align}
\min_{\substack{i \neq a \\ y_i \geq y_a}} \norm{\<y> - \<y>_0}
&= \min_{\substack{i \neq a \\ y_i = y_a}} \norm{\<y> - \<y>_0} \label{eq:min_y_y0_1st} \\
&= \min_{\substack{i \neq a \\ y_i = y_a}} \sqrt{(y_1 - y_{0,1})^2 + \cdots + (y_m - y_{0,m})^2} \label{eq:min_y_y0_2nd} \\
&= \min_{\substack{i \neq a \\ y_i}} \sqrt{(y_i - y_{0,a})^2 + (y_i - y_{0,i})^2} \label{eq:min_y_y0_3rd} \\
&= \min_{i \neq a} (y_{0,a} - y_{0,i})/\sqrt{2} \label{eq:min_y_y0_4th} \\
&= (y_{0,a} - y_{0,b})/\sqrt{2} \label{eq:min_y_y0_5th} \\
&= \delta/\sqrt{2} \label{eq:min_y_y0_6th} .
\end{align}
The RHS of equation \eqref{eq:min_y_y0_1st} comes from the fact that we can associate any case for which $y_i \geq y_a$ with the case for which $y_i = y_a$, which will have a lower value of $\norm{\<y> - \<y>_0}$. 
Equation \eqref{eq:min_y_y0_3rd} comes from noting that for all scenarios in which $y_i = y_a$, the value $\norm{\<y> - \<y>_0}$ will be the lowest when all entries of $\<y>$ and $\<y>_0$ are equal, except for those corresponding to indices $i$ and $a$.
Equation \eqref{eq:min_y_y0_4th} comes from noting that the expression in the square root of \eqref{eq:min_y_y0_3rd} is minimized when $y_i = (y_{0,a} + y_{0,i})/2$.
Equation \eqref{eq:min_y_y0_5th} comes from noting that the minimum value of $y_{0,a} - y_{0,i}$ occurs when $i=b$.
Equation \eqref{eq:min_y_y0_6th} comes from the definition of $\delta$.

In summary, for any $\<y>$, if $\norm{\<y> - \<y>_0} < \delta/\sqrt{2}$, then the network classification cannot change. Therefore, the network classification will not change if
\begin{equation}
\norm{\<f>(\<x>) - \<f>(\<x>_0)} < \frac{\delta}{\sqrt{2}} .
\label{eq:delta_norm_delta}
\end{equation}

Next, from Proposition \ref{prp:perturbation}, if $\norm{\<x> - \<x>_0} \leq \epsilon$ for all $\<x> \in \mathcal{X}$, then $\norm{\<f>(\<x>) - \<f>(\<x>_0)} \leq \epsilon L(\<x>_0, \mathcal{X})$ for all $\<x> \in \mathcal{X}$.
Therefore, if $\epsilon L(\<x>_0, \mathcal{X}) < \delta/\sqrt{2}$ then $\norm{\<f>(\<x>) - \<f>(\<x>_0)} < \delta/\sqrt{2}$, which using \eqref{eq:delta_norm_delta} tells us that the classification cannot change.

\end{proof}

In practice, we will use Proposition \ref{prp:adversarial} to find the largest $\epsilon$ which is a lower bound on the minimum adversarial perturbation (i.e., that satisfies \eqref{eq:eps_L_delta}). Note that $L(\<x>_0, \mathcal{X})$ is a non-decreasing function of $\epsilon$ (a consequence of Proposition \ref{prp:superset}), so $\epsilon L(\<x>_0, \mathcal{X})$ is also non-decreasing in $\epsilon$. This means that we can determine the largest possible $\epsilon$ by increasing $\epsilon$ until \eqref{eq:eps_L_delta} is no longer satisfied. Section \ref{sec:adversarial_application} shows simulations in which we apply this technique.

\section{Computational techniques} \label{sec:computational_techniques}

We will now discuss two computational insights that make it possible to apply our method to large layers and networks.


Our first computational insight concerns efficiently calculating $\overline{\<y>}$, which is the upper bound of the set $\mathcal{Y}$. The equation for $\overline{\<y>}$ is shown in \eqref{eq:ybar_i}, and requires determining $\<a>_i^T$, the $i^{th}$ row of the $\<A>$ matrix. For large convolutional layers, the $\<A>$ matrices are usually too large to store in random-access memory. So instead of determining the entire $\<A>$ matrix, we can obtain the $i^{th}$ row of $\<A>$ by noting that $\<a>_i^T = \<A>^T \<e>_i$ where $\<e>_i \in \mathbb{R}^m$ is the $i^{th}$ standard basis vector. To perform the $\<A>^T$ transformation we can use a transposed convolution function based on the original convolution function (making sure to reshape $\<e>_i$ into the appropriate input size).
Furthermore, to reduce computation time, we can use a batch of standard basis vectors in the transposed convolution function to obtain multiple rows of $\<A>$.


Our second computational insight concerns efficiently computing $\norm{\<R> \<A> \<D>}$, which is the affine-ReLU local Lipschitz constant bound from Theorem \ref{thm:aff_relu}. The matrix $\<R> \<A> \<D>$ is usually too large to be stored in memory, so we use a power iteration to compute it.
Note that the largest singular value of a matrix $\<M>$ is the square root of the largest eigenvalue of $\<M>^T \<M>$. So, we can find the spectral norm of $\<M>$ by applying a power iteration to the operator $\<M>^T \<M>$. In our case, our matrix is $\<M> = \<R> \<A> \<D>$, so we have $\<M>^T \<M> = \<D>^T \<A>^T \<R>^T \<R> \<A> \<D> = \<D> \<A>^T \<R>^2 \<A> \<D>$. For convolutional layers, we can perform the $\<A>$ transformation using the convolution function (with zero bias) and the $\<A>^T$ transformation using transposed convolution (with zero bias). Furthermore, since both $\<R>^2$ and $\<D>$ are diagonal matrices, we can apply these transformations using elementwise vector multiplication.
\section{Simulations} \label{sec:simulations}

\subsection{Local Lipschitz constants for various networks}

\begin{figure}[ht]

\newcommand\hgt{0.45}
\newcommand{\spc}{\hspace{4.0pt}}
\newcommand{\wid}{.485\textwidth}

\footnotesize

\centering
\begin{tikzpicture}[every text node part/.style={align=center}]

\colorlet{color1}{black!15}
\colorlet{color2}{black!5}

\fill[color1] (0,0) rectangle (\wid,-\hgt);
\fill[color2] (0,-\hgt) rectangle (\wid,-2*\hgt);
\fill[color1] (0,-2*\hgt) rectangle (\wid,-3*\hgt);
\fill[color2] (0,-3*\hgt) rectangle (\wid,-4*\hgt);

\node (input) at (.5*\wid,-.5*\hgt)
	{\textbf{CIFAR-10 Net Architecture}};
\node[anchor=west] (input) at (0,-1.5*\hgt)
	{C3-32 \spc C3-32 \spc MP-2 \spc  D  \spc C3-64 \spc  C3-64  \spc MP-2 \spc  D  \spc FC-512  \spc D  \spc FC-10};

\node (input) at (.5*\wid,-2.5*\hgt)
	{\textbf{MNIST Net Architecture}};
\node[anchor=west] (input) at (0,-3.5*\hgt)
	{C5-6 \spc MP-2 \spc C5-16 \spc MP-2 \spc FC-120 \spc FC-84 \spc FC-10};

\end{tikzpicture}

\caption{Architectures of the networks we constructed for this paper (in sequence left-to-right). ``C$\alpha$-$\beta$'' denotes a convolution layer with kernel size $\alpha$ and $\beta$ output channels, ``MP-$\alpha$'' denotes a max pooling layer with kernel size $\alpha$, ``FC-$\alpha$'' denotes a fully-connected layer with $\alpha$ output features, and ``D'' denotes dropout layers. All convolution layers are followed by a ReLU and have a stride of 1. All fully-connected layers are followed by a ReLU unless it is the last layer.}
\label{fig:network_architectures}
\end{figure}

\begin{figure*}[ht]
\centering
\includegraphics[width=.32\textwidth]{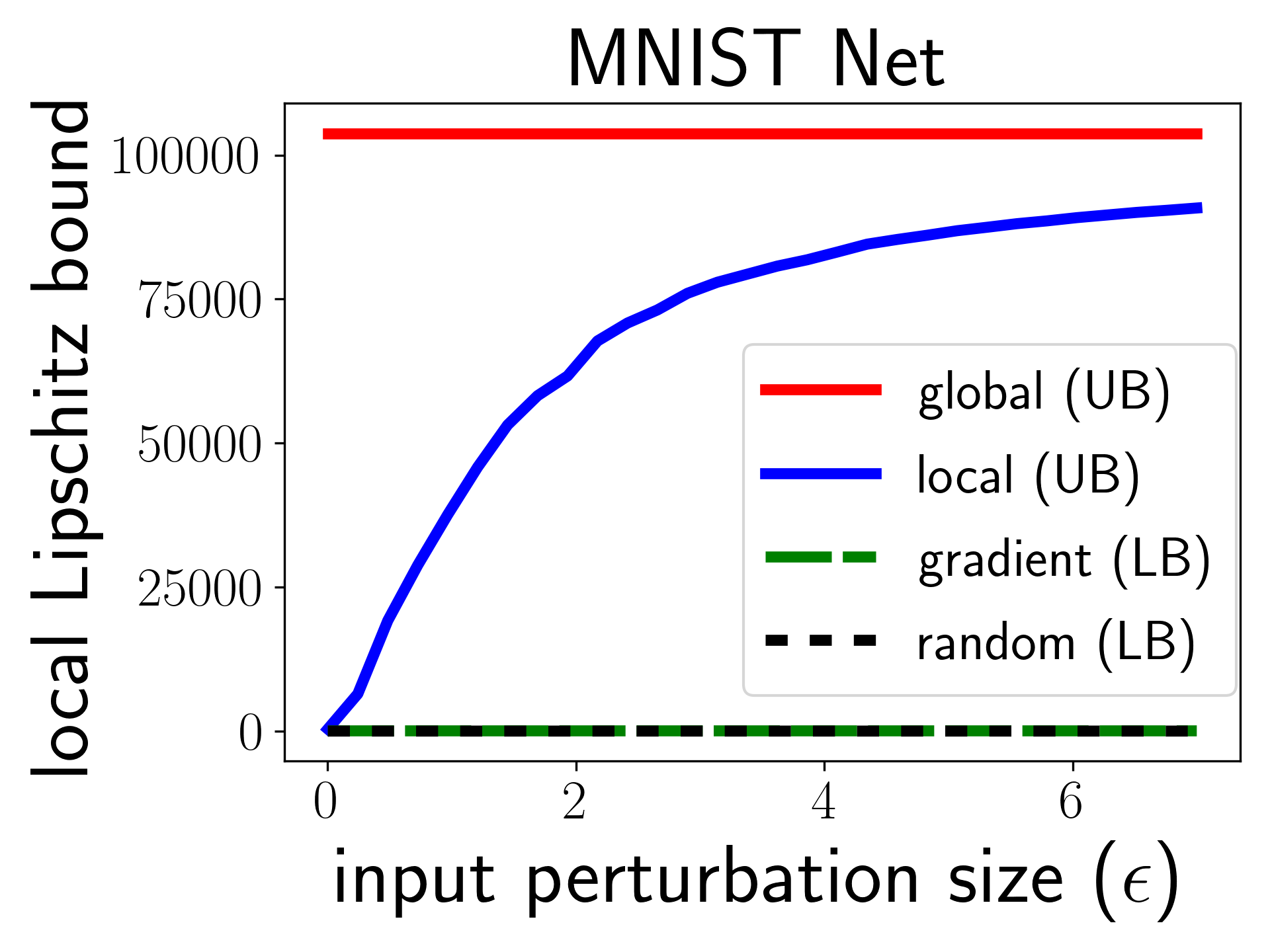}
\includegraphics[width=.32\textwidth]{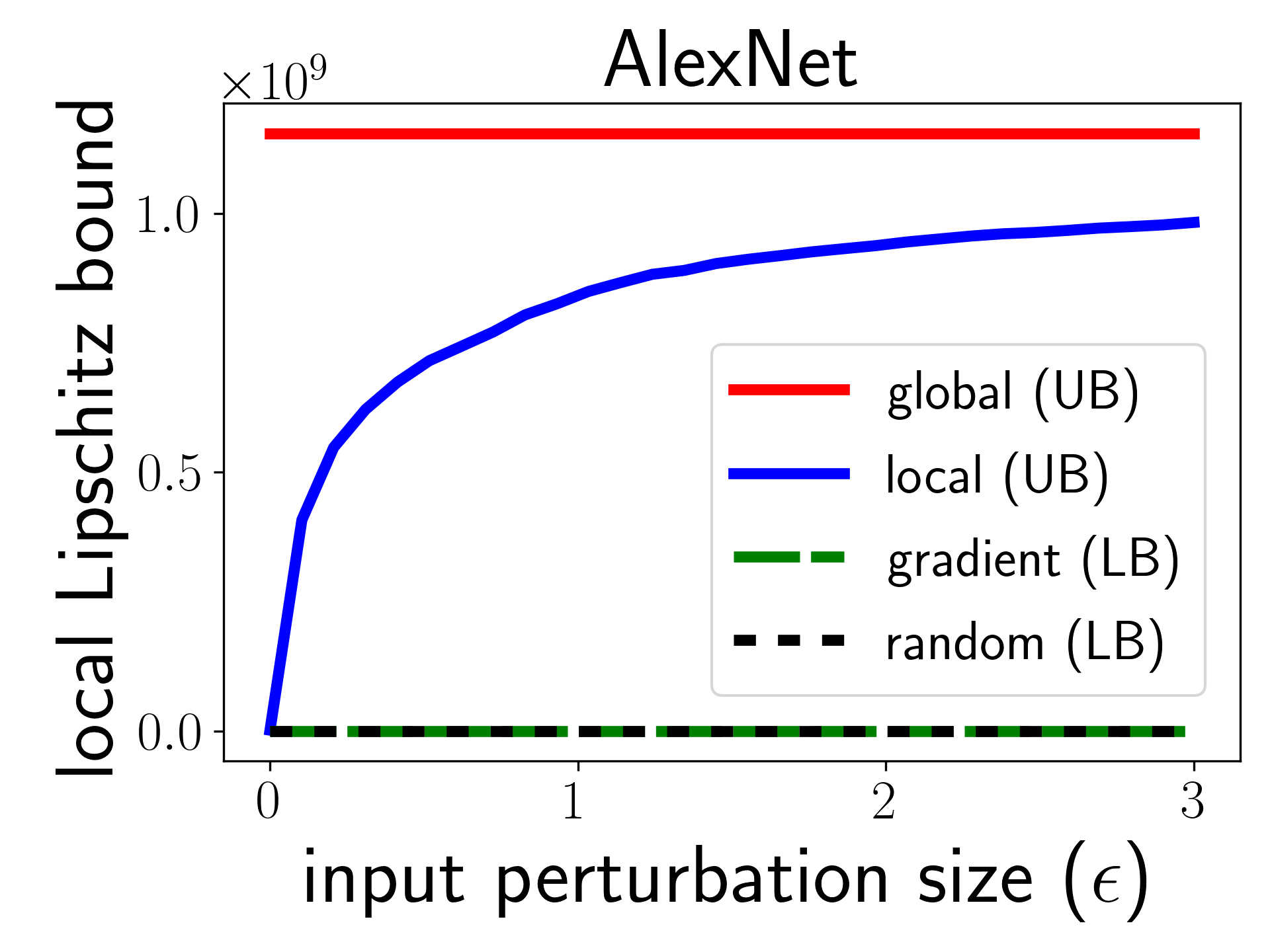}
\includegraphics[width=.32\textwidth]{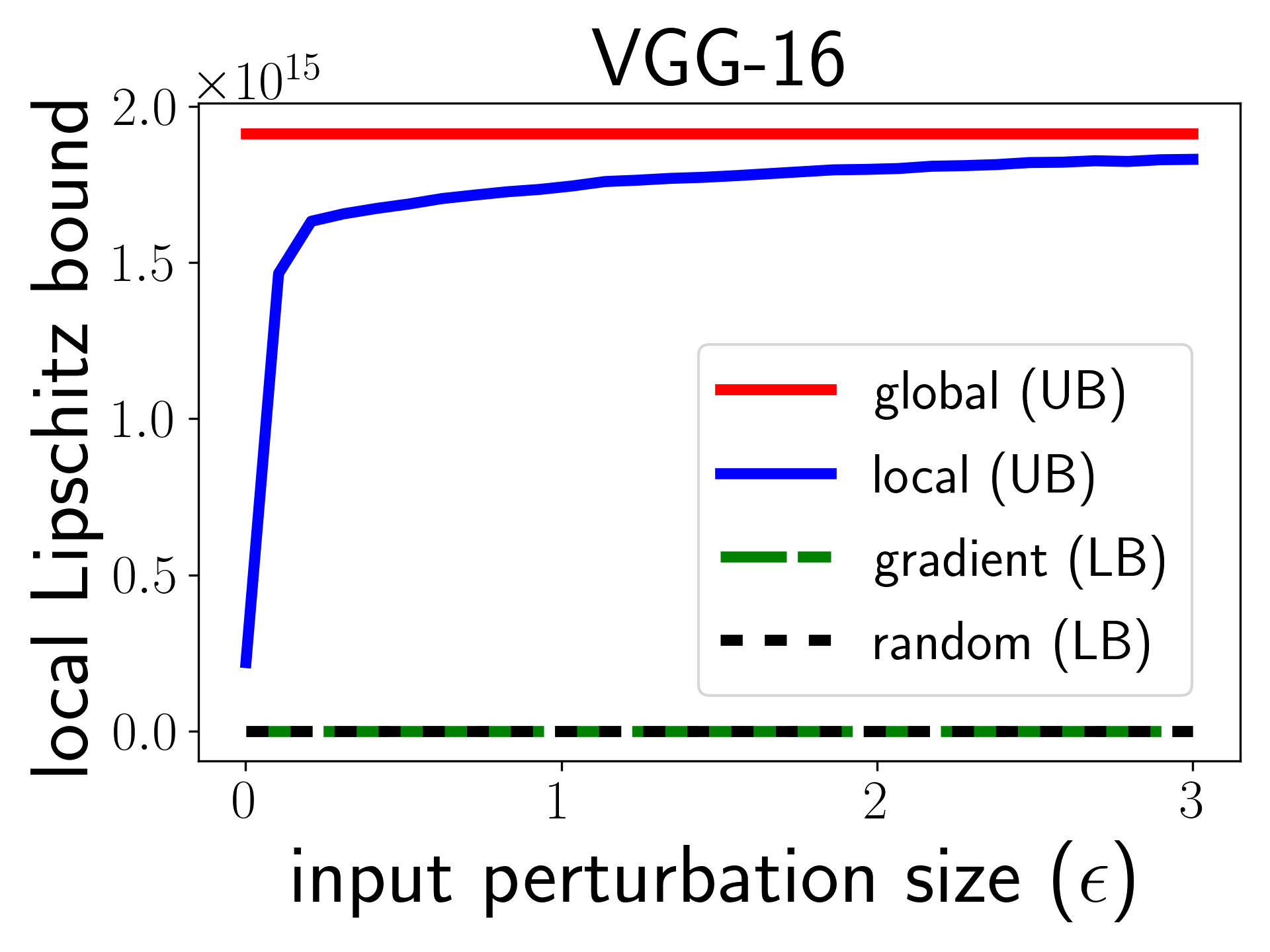}
\caption{Upper bounds (UB) and lower bounds (LB) on the local Lipschitz constants of the MNIST Net, AlexNet, and VGG-16 networks for various perturbation sizes (the plot for CIFAR-10 Net is not shown, but has similar trends). These results are computed with respect to the nominal input images in Fig. \ref{fig:nominal_inputs}. The term ``global'' represents to global bounds computed using Theorem \ref{thm:aff_relu} for affine-ReLU functions, ``local'' refers to local bounds computed using $\norm{\<A>}$ for affine-ReLU functions, ``gradient'' refers to lower bounds determined using gradient ascent, and ``random'' refers to lower bounds determined by sampling random input perturbations.}
\label{fig:network_lipschitz}
\end{figure*}

\begin{figure}[ht]

\begin{tikzpicture}[every text node part/.style={align=center}]
\node[inner sep=0pt] (mnist) at (0,0)
    {\includegraphics[width=.15\textwidth]{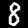}};
\node[inner sep=0pt] (cifar10) at (3,0)
    {\includegraphics[width=.15\textwidth]{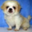}};
\node[inner sep=0pt] (imagenet) at (6,0)
    {\includegraphics[width=.15\textwidth]{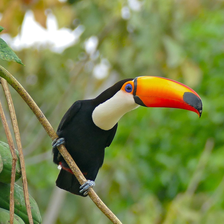}};

\node[anchor=south,below] (lipschitz) at (0,-1.5) {MNIST Net \\ $28{\times}28$ \\ 8};
\node[anchor=south,below] (lipschitz) at (3,-1.5) {CIFAR-10 Net \\ $3{\times}32{\times}32$ \\ dog};
\node[anchor=south,below] (lipschitz) at (6,-1.5) {AlexNet/VGG-16 \\ $3{\times}224{\times}224$ \\ toucan};

\end{tikzpicture}

\caption{Images, networks, image sizes, and true class names of the nominal input images used in our simulations.}

\label{fig:nominal_inputs}
\end{figure}

We compare four different networks in this paper: a seven-layer network trained on MNIST (which we refer to as ``MNIST Net''), an eight-layer network trained on CIFAR-10 (which we refer to as ``CIFAR-10 Net''), AlexNet \cite{Krizhevsky} (11-layers, trained on ImageNet), and VGG-16 (21 layers including 16 affine-ReLU layers, trained on ImageNet) \cite{Simonyan}. The architectures of MNIST Net and CIFAR-10 Net are shown in Fig. \ref{fig:network_architectures}. We constructed MNIST Net ourselves and trained it to 99\% top-1 test accuracy in 100 epochs. We also constructed CIFAR-10 Net ourselves, and trained it to 84\% top-1 test accuracy in 500 epochs. We used the trained versions of AlexNet and VGG-16 from Pytorch's Torchvision package. Note these networks both have an ``adaptive average pooling'' layer which has no effect when the network inputs are the default size of $3{\times}224{\times}224$.

All simulations were performed using Pytorch, and were run on an Nvidia GTX 1080 Ti card.
In our simulations we used the nominal input images shown in Fig. \ref{fig:nominal_inputs}, which all classify correctly.
In each simulation we determined global and local upper bounds, as well as lower bounds computed using both gradient ascent and random sampling methods.
Fig. \ref{fig:network_lipschitz} shows the full-network local Lipschitz bounds, and Table \ref{tab:computation_times} shows the computation times.


The results show that our Lipschitz bounds increase with the size of the perturbation $\epsilon$, and approach the global bound for large $\epsilon$. For small perturbations, the bound is significantly lower than the global bound.



\begin{table}[ht]
\centering
\begin{tabular}{ l c c c c }
\toprule
& \textbf{MNIST Net} & \textbf{CIFAR-10 Net} & \textbf{AlexNet} & \textbf{VGG-16} \\
\midrule
time & .1 sec & 1 sec & 16 sec & 52 min \\ 
\bottomrule
\end{tabular}
\vspace{9pt}
\caption{Times to compute the local Lipschitz constant upper bound for one input perturbation of size $\epsilon$ for various networks (using Algorithm \ref{alg:steps}), based on the nominal input images in Fig. \ref{fig:nominal_inputs}. All computations were performed on a desktop computer using Pytorch and an Nvidia GTX 1080 Ti card.}
\label{tab:computation_times}
\end{table}

\subsection{Bounds on adversarial examples} \label{sec:adversarial_application}

Next, we apply our local Lipschitz bounds to determine lower bounds on adversarial perturbations, as described in Section \ref{sec:local_lipschitz_adversarial_bounds}. We calculated these bounds with respect to the same networks and nominal input images in Fig. \ref{fig:nominal_inputs} in Section \ref{sec:simulations}. The results are shown in Table \ref{tab:classification_bounds}.

\begin{table}[ht]
\centering
\setlength{\tabcolsep}{5pt}
\renewcommand{\arraystretch}{1.2}
\begin{tabular}{ l l l l l l }
\toprule
& \multicolumn{1}{c}{} & \multicolumn{1}{c}{\specialcell{\textbf{MNIST} \\ \textbf{Net}}} & \multicolumn{1}{c}{\specialcell{\textbf{CIFAR-10} \\ \textbf{Net}}} & \multicolumn{1}{c}{\textbf{AlexNet}} & \multicolumn{1}{c}{\textbf{VGG-16}} \\
\midrule
\parbox[t]{4mm}{\multirow{2}{*}{\rotatebox[origin=cB]{90}{\parbox{.6cm}{\centering \textit{upper} \\ \textit{bound}}}}} &
FGSM &
$1.8 \cdot 10^{1}$ &
$6.7 \cdot 10^{0}$ &
$7.8 \cdot 10^{0}$ & 
$4.7 \cdot 10^{1}$ \\
& gradient &
$4.0 \cdot 10^{0}$ &
$2.6 \cdot 10^{0}$ &
$4.6 \cdot 10^{0}$ & 
$2.7 \cdot 10^{0}$ \\
\midrule
\parbox[t]{4mm}{\multirow{2}{*}{\rotatebox[origin=cB]{90}{\parbox{.6cm}{\vspace{-.5mm}\centering \textit{lower} \\ \textit{bound}}}}} &
local &
$ 5.2 \cdot 10^{-2}$ &
$ 4.2 \cdot 10^{-3}$ &
$ 1.0 \cdot 10^{-5}$ &
$ 1.8 \cdot 10^{-8}$ \\
& global &
$5.1 \cdot 10^{-4}$ &
$3.9 \cdot 10^{-4}$ &
$8.6 \cdot 10^{-9}$ & 
$7.4 \cdot 10^{-15}$ \\ 
\bottomrule
\end{tabular}
\vspace{9pt}
\caption{Bounds on the minimum Euclidean perturbation required to change the top classification of a classification network, based on the nominal input images shown in Fig. \ref{fig:nominal_inputs}. The ``global'' row refers to bounds computed using the global Lipschitz constant, ``local'' refers to bounds computed using the local Lipschitz constant, ``gradient'' refers to lower bounds determined by finding adversarial examples using gradient ascent, and ``FGSM'' refers to lower bounds determined using the Fast Gradient Sign Method \cite{Goodfellow2014}.}
\label{tab:classification_bounds}
\end{table}
\setlength{\tabcolsep}{6pt}
\renewcommand{\arraystretch}{1.0}

For each network, our method provides an orders of magnitude improvement over the global bound. We are unaware of any other method that can improve upon the global bound for networks such as AlexNet and VGG-16, so our bounds represent a significant improvement in certifying adversarial bounds to Euclidean perturbations.

Also shown in Table \ref{tab:classification_bounds} are upper bounds computed using gradient methods. Having both lower and upper bounds allows us to identify the range in which the true minimum perturbation resides.

\subsection{Comparison with other methods}

As mentioned in Section \ref{sec:intro}, there are several methods which provide Lipschitz estimates or bounds, but many only work for small networks. We did not apply the methods in \cite{Jordan,Latorre} as they have only been shown to be applicable to networks smaller than the smallest network we considered (MNIST Net).
We were able to implement the method in \cite{Fazlyab} to MNIST Net, but it ran out of memory for the larger networks. As this method is not designed to incorporate max pooling functions, we used it estimate the global Lipschitz constant of each affine-ReLU sequence, and combined the results with the max pooling global bound, which resulted in an estimate of $67{\times}10^3$. We also were able to apply the estimation method in \cite{Scaman}. We note that \cite{Scaman} presents two bounds: AutoLip which is equivalent to the global bound, and SeqLip. SeqLip produced estimates of $72{\times}10^3$, $7{\times}10^3$, and $174{\times}10^6$ for MNIST Net, CIFAR-10 Net, and AlexNet, respectively, and took longer than 48 hours to produce an estimate for VGG-16 so we aborted the operation.

\section{Conclusion} \label{sec:conclusion}

We have presented a method to determine guaranteed upper bounds on the local Lipschitz constant of neural networks with ReLU activations.
Our approach is based on determining Lipschitz constants and bounds of ReLU, affine-ReLU and max pooling functions. We then showed how we can calculate these Lipschitz constants/bounds in a sequential fashion for each layer of a feedforward network, which allows us to compute a network-wide bound.

We calculated our bounds for small MNIST and CIFAR-10 networks, as well as large networks such as AlexNet and VGG-16. The results show that our bounds are especially tight for small perturbations. We then showed how we can use our method to determine lower bounds on Euclidean adversarial perturbations. To the best of our knowledge, our method produces the tightest known bounds for larger networks.

Potential future work includes reducing the computation time of our method, as well as further mathematical analysis to obtain even tighter bounds. Note that for larger layers, the main computational bottleneck comes from computing $\overline{\<y>}$, which requires evaluating each row of the $\<A>$ matrix.

Finally, there are several ways in which this work could be extended. For example, we could consider activation functions other than ReLU, and we could consider other types of layers.
Additionally, as we only considered the 2-norm, we could generalize our results to other norms.
Note that many of our results hold for general matrix norms, and we believe many of the other results could be generalized without too much trouble.





\bibliographystyle{IEEEtran}
\bibliography{root.bib}

\end{document}